\theoremstyle{definition}  
\newtheorem{assumption}{Assumption}
\theoremstyle{plain}
\newtheorem{corollary}{Corollary} 
\newtheorem{lemma}{Lemma}
\newtheorem{fact}{Fact}
\newtheorem{definition}{Definition}
\newcommand{\pref}[1]{\prettyref{#1}}
\newcommand{\savehyperref}[2]{\texorpdfstring{\hyperref[#1]{#2}}{#2}}
\DeclarePairedDelimiter{\brk}{[}{]}
\DeclarePairedDelimiter{\crl}{\{}{\}}
\DeclarePairedDelimiter{\prn}{(}{)}
\DeclarePairedDelimiter{\nrm}{\|}{\|}
\let\Pr\undefined
\DeclareMathOperator{\Pr}{Pr}
\DeclareMathOperator*{\argmin}{arg\,min} 
\DeclareMathOperator*{\argmax}{arg\,max}
\newcommand{\indicator}[1]{\mathbbm{1}\crl*{#1}}    
\newcommand{\eps}{\epsilon}
\newcommand{\ldef}{\vcentcolon=}
\newcommand{\wt}[1]{\widetilde{#1}}
\newcommand{\wh}[1]{\widehat{#1}}
\def\ddefloop#1{\ifx\ddefloop#1\else\ddef{#1}\expandafter\ddefloop\fi}
\def\ddef#1{\expandafter\def\csname bb#1\endcsname{\ensuremath{\mathbb{#1}}}}
\def\ddefloop#1{\ifx\ddefloop#1\else\ddef{#1}\expandafter\ddefloop\fi}
\def\ddef#1{\expandafter\def\csname b#1\endcsname{\ensuremath{\mathbf{#1}}}}
\def\ddef#1{\expandafter\def\csname c#1\endcsname{\ensuremath{\mathcal{#1}}}}
\def\ddef#1{\expandafter\def\csname h#1\endcsname{\ensuremath{\widehat{#1}}}}
\def\ddef#1{\expandafter\def\csname hc#1\endcsname{\ensuremath{\widehat{\mathcal{#1}}}}}
\def\ddef#1{\expandafter\def\csname t#1\endcsname{\ensuremath{\widetilde{#1}}}}
\def\ddef#1{\expandafter\def\csname tc#1\endcsname{\ensuremath{\widetilde{\mathcal{#1}}}}}
\newcommand\numberthis{\addtocounter{equation}{1}\tag{\theequation}}
\newcommand{\myopic}{\mathsf{expl}}  
\newcommand{\lerad}{c}
\newcommand{\legap}{\alpha}
\newcommand{\regret}{\operatorname{Reg}}
\renewcommand{\epsilon}{\varepsilon}
\renewcommand{\eps}{\varepsilon}
\newcommand{\dif}{\mathrm{d}}
\icmltitlerunning{Performance Guarantees for Epsilon-Greedy RL}
\newenvironment{customlemma}[1]
  {\innercustomlemma}
  {\endinnercustomlemma}
\begin{document}

\twocolumn[
\icmltitle{Guarantees for Epsilon-Greedy Reinforcement Learning\\ with Function Approximation}

\icmlsetsymbol{equal}{*}

\begin{icmlauthorlist}
\icmlauthor{Christoph Dann}{google}
\icmlauthor{Yishay Mansour}{google,tau}
\icmlauthor{Mehryar Mohri}{google,nyu}
\icmlauthor{Ayush Sekhari}{cornell}
\icmlauthor{Karthik Sridharan}{cornell}
\end{icmlauthorlist}

\icmlaffiliation{google}{Google Research}
\icmlaffiliation{tau}{Tel Aviv University}
\icmlaffiliation{nyu}{Courant Institute of Mathematical Sciences}
\icmlaffiliation{cornell}{Cornell University}

\icmlcorrespondingauthor{Christoph Dann}{cdann@cdann.net}

\icmlkeywords{Machine Learning, ICML}

\vskip 0.3in
]

\printAffiliationsAndNotice{}  

\begin{abstract}

Myopic exploration policies such as $\epsilon$-greedy, softmax, or
Gaussian noise fail to explore efficiently in some reinforcement learning tasks and yet, they
perform well in many others. In fact, in practice, they are often
selected as the top choices, due to their simplicity.
But, for what tasks do such policies succeed? Can we give theoretical
guarantees for their favorable performance? These crucial questions
have been scarcely investigated, despite the prominent practical
importance of these policies.
This paper presents a theoretical analysis of such policies and
provides the first regret and sample-complexity bounds for
reinforcement learning with myopic exploration. Our results apply to
value-function-based algorithms in episodic MDPs with bounded
Bellman Eluder dimension.  We propose a new complexity measure  called myopic exploration gap, denoted by \(\alpha\), that captures a structural property of the
MDP, the exploration policy and the given value function class. We show that the sample-complexity of myopic
exploration scales quadratically with the inverse of this quantity, $1 / \alpha^2$.
We further demonstrate through concrete examples that myopic exploration
gap is indeed favorable in several tasks where myopic
exploration succeeds, due to the corresponding dynamics and reward
structure.
\end{abstract}

\section{Introduction}

Remarkable empirical and theoretical advances have been made in scaling up Reinforcement Learning (RL) approaches to complex problems with rich observations. On the theory side, a suite of algorithmic and analytical tools have been developed for provably sample-efficient RL in MDPs with general structural assumptions \citep{jiang2017contextual, sun2018model, wang2020reinforcement, jin2020provably, foster2020instance, dann2021provably, du2021bilinear}. The key innovation in these works is to enable strategic exploration in combination with general function approximation, leading to strong worst-case guarantees.
On the empirical side, agents with large neural networks have been trained, for example, to successfully play Atari games \citep{mnih2015human}, beat world-class champions in Go \citep{silver2017mastering} or control real-world robots \citep{kalashnikov2018qt}. However, perhaps surprisingly, many of these empirically successful approaches, do not rely on strategic exploration but instead  perform simple myopic exploration. Myopic approaches simply  perturb the actions prescribed by the current estimate of the optimal policy, for example by taking a uniformly random action with probability $\epsilon$ (called $\epsilon$-greedy exploration).

While myopic exploration is known to have exponential sample complexity in the worst case \citep{osband2019deep}, it still remains a popular choice in practice. The reason for this is because myopic exploration is easy to implement and works well in a range of problems, including many of practical interest \citep[][]{mnih2015human, kalashnikov2018qt}. On the other hand, it is unknown how to implement existing strategic exploration approaches computationally efficiently with general function approximation. They either require solving intricate non-convex optimization problems \citep{jiang2017contextual, jin2021bellman, du2021bilinear} or sampling from intricate distributions \citep{zhang2021feel}.

Despite its empirical importance, theoretical analysis of myopic exploration is still quite rudimentary, and performance guarantees for sample complexity or regret bounds are largely unavailable. In order to bridge this gap between theory and practice, in this work we investigate:
\vspace{-1mm}
\begin{quote}
    \emph{In which problems does myopic exploration lead to sample-efficient learning, and can we provide a theoretical guarantee for myopic reinforcement learning algorithms such as $\epsilon$-greedy?}
\end{quote}
\vspace{-1mm}
We address this question by providing a framework for analyzing value-function based RL with myopic exploration. Importantly, the algorithm we analyze is easy to implement since it only requires minimizing standard square loss on the value function class for which many practical approaches exist, even on complex neural networks \citep{mnih2015human, silver2016mastering, silver2017mastering, rakhlin2015online}. 

Our main contributions are threefold:
\begin{enumerate}[label=\(\bullet\), topsep=0pt, itemsep=0mm]  
    \item We propose a new complexity measure called myopic exploration gap $\alpha(f, \cF)$ that captures how easy it is for a given myopic exploration strategy to identify that a candidate value function $f \in \cF$ is sub-optimal. This complexity measure is large for problems with favorable transition dynamics and rewards, where we expect myopic exploration to perform well. 
    \item We derive a sample complexity upper bound for RL with myopic exploration. We show that for any sub-optimal $\cF' \subset \cF$, the algorithm uses policies corresponding to $\cF'$ for at most $\tilde O\prn[\big]{\tfrac{H^2 d}{\alpha(\cF', \cF)^2}}$ episodes, where $H$ is the episode length, $d$ is the Bellman Eluder dimension and $\alpha(\cF', \cF) = \min_{f \in \cF'} \alpha(f,\cF)$. Using this  sample complexity bound, we provide the first regret bound for $\epsilon$-greedy RL in MDPs. 
    \item We prove an almost matching sample complexity lower bound of $\Omega(d / \alpha(\cF', \cF)^2)$ for $\epsilon$-greedy RL, hence showing that the dependency on Bellman Eluder dimension or the myopic exploration gap in our upper bound cannot be improved further. 
\end{enumerate}

\section{Preliminaries}
\label{sec:setting}
\textbf{Episodic MDP.} We consider RL in episodic Markov Decision Processes (MDPs), each denoted by $\cM = (\cX, \cA, H, P, R)$, where $\cX$ is the observation (or state) space, $\cA$ is the action space, $H \in \bbN$ is the number of time steps in each episode, $P = (P_h)_{h \in [H]}$ is the collection of transition kernels and $R = (R_h)_{h \in [H]}$ is the collection of immediate reward distributions. An episode is a sequence of states, actions and rewards $(x_1, a_1, r_1, \dots, x_H, a_H, r_H, x_{H+1})$ with a fixed initial state $x_1 = x_{\textrm{init}} \in \cX$ and terminal state $x_{H+1} = x_{\textrm{end}} \in \cX$. 
All states (except $x_1$) are sampled from the transition kernels $x_{h+1} \sim P_h(\cdot | x_h, a_h) = P_h(x_h, a_h)$ and rewards are generated as $r_h \sim R_h(x_h, a_h)$. We assume that $P$ and $R$ are s.t. $r_h \geq 0$ and the sum of all rewards in an episode is bounded  as $\sum_{h=1}^H r_h \leq 1$ for any action sequence almost surely.

\textbf{Policies and value functions.} The agent interacts with the environment in multiple episodes indexed by $t$. It chooses actions according to a \emph{policy} $\pi = (\pi_h \colon \cX \rightarrow \Delta_{\cA})_{h \in [H]}$ that maps states to distributions over actions. The space of all such policies is denoted by $\Pi$. 
We denote the distribution over an episode induced by following a policy $\pi$ by $\bbP_{\pi}$ and the expectation w.r.t. this law by $\bbE_{\pi}$. We define the Q-function and value-function of a policy $\pi$ at step $h$ as
\begin{align*}
    Q^\pi_h(x, a) &= \bbE_{\pi}[r_h + V_{h+1}^{\pi}(x_{h+1}) ~|~x_h = x, a_h = a],\\
    V_h^\pi(x) &= \bbE_{\pi}[Q_h^\pi(x_h, a_h) | x_h = x], 
\end{align*}
respectively. The optimal Q- and value functions are denoted by $Q^\star_h$ and $V^\star_h$. The occupancy measure of $\pi$ at time $h$ is denoted as $\mu_h^\pi(x,a) = \bbP_{\pi}(x_h = x, a_h = a)$.
For any function $f \colon \cX \times \cA \rightarrow \bbR$, the \emph{Bellman operator} at step $h$ is
\begin{align*}
    (\cT_h f)(x,a) &= \bbE\brk[\big]{r_h \!+ \!\max_{a' \in \cA} f(x_{h+1}, a')  \Big| x_h \!= \!x, a_h \!=\! a}.
\end{align*} 

\textbf{Value function class.}
We assume the learner has access to a Q-function class $\cF = \cF_1 \times \dots \times \cF_{H}$ where $\cF_h \subseteq (\cX \times \cA \rightarrow [0, 1])$, and for convention we set $\cF_{H+1} = \{ f_{H+1} = 0\}$. 
We denote by $\pi^f = \{ \pi^f_{h} \}_{h \in [H]}$ the greedy policy w.r.t. $f \in \cF$, i.e., $\pi^f_{h}(x) = \argmax_{a \in \cA} f_h(x, a)$. The set of greedy policies of $\cF$ is denoted by $\Pi_{\cF}$.
The Bellman residual and squared Bellman error of $f \in \cF$ at time $h$ are 
\begin{align*}
    \cE_h f = f_h - \cT_h f_{h+1} \quad \textrm{ and } \quad \cE_h^2 f = (f_h - \cT_h f_{h+1})^2.
\end{align*}

We further adopt the following assumption common in RL theory with function approximation \citep[cf.][]{jin2021bellman, wang2020reinforcement, du2021bilinear, dann2021provably}:
\begin{assumption}[Realizability and Completeness]\label{ass:completeness}
$\cF$ is realizable and complete under the Bellman operator, that is, $Q^\star_h \in \cF_h$ for all $h \in [H]$ and for every $h \in [H]$ and $f_{h+1} \in \cF_{h+1}$ there is a $f_h \in \cF_{h}$ such that $f_{h} = \cT_h f_{h+1}$.
\end{assumption}

\textbf{Bellman Eluder dimension.} To capture the complexity of the MDP and the function class we use the notion of Bellman Eluder dimension $\operatorname{dim}_{\operatorname{BE}}(\cF, \Pi_{\cF}, \mu)$ introduced by \citet{jin2021bellman}. This complexity measure is a distributional version of the Eluder dimension applied to the class of Bellman residuals $\cE_h \cF$.
The Bellman Eluder dimension matches the natural complexity measure in many special cases, e.g., number of states and actions ($SA$) in tabular MDPs or the intrinsic dimension $d$ in linear MDPs.
The formal definition and the main properties of the Bellman Eluder dimension we use in our work, can be found in \pref{app:supporting_results}.

We denote by $N_{\cF_h}(\lambda)$ the $\ell_\infty$ covering number of $\cF_h$ w.r.t. radius $\lambda$, i.e., the size of the smallest $\cZ \subset \cF_h$ so that for any $f \in \cF$ there is a $f' \in \cZ$ with $\max_{h \in [H]}\|f_h - f'_h\|_\infty \leq \lambda$ and we use $\bar N_{\cF}(\lambda) = \sum_{h=1}^H N_{\cF_h}(\lambda)$.

\section{RL with Myopic Exploration} 

Our work analyzes value-function based algorithms with myopic exploration that follow the template in \pref{alg:egreedy_genfun}. The algorithm receives as input a value function class $\cF$ as well as an exploration mapping $\myopic$ that maps each function in $\cF$ to an exploration policy $\Pi$. Before each episode, the algorithm computes a Q-function estimate $\wh f_t$ in a dynamic programming fashion by least squares regression using all data observed so far (finite horizon fitted Q iteration). The sampling policy $\wt \pi_t$ is then determined using the exploration mapping $\myopic$. After sampling one episode with $\wt \pi_t$, the algorithm adds all observations to the datasets and computes a new value function.

This algorithm template encompasses many common exploration heuristics by setting the exploration mapping $\myopic$ appropriately. These include:\footnote{Our examples are Markovian policies but we also support non-Markovian exploration policies that inject temporally correlated noise, e.g., the Ornstein-Uhlenbeck noise used by \citet{lillicrap2015continuous} or $\epsilon$-greedy with options \citep{dabney2020temporally}.}
\begin{center}
\begingroup
\setlength{\tabcolsep}{10pt}
\renewcommand{\arraystretch}{1.3} 
\begin{tabular}{ c | c }
 Exploration strategy & $[\myopic(f)]_h(a | x) \propto$ \\ \hline\hline
 $\epsilon$-greedy & $(1 - \epsilon) \pi_h^f(a |x) + {\epsilon}/{A}$ \\  
 softmax & $ \exp(\beta f_h(x, a))$\\
 Gaussian noise & $   
   \exp\prn[\big]{- \frac{1}{2\sigma^2} \prn[\big]{a - \pi^f_h(x)}^2}$
\end{tabular}
\endgroup
\end{center}
Note that $\myopic$ only takes the point estimate for the Q-function $\wh f_t$ as input and no measure of uncertainty. Thus,  approaches like upper-confidence bounds or Thompson sampling are beyond the scope of our work. Furthermore, to keep the exposition clean, we restrict ourselves to mappings $\myopic$ that are independent of the number of episodes $k$, but we allow the mapping to depend on the total number of episodes $T$. Getting anytime guarantees with variable exploration policies (e.g. $\epsilon$-greedy with decreasing $\epsilon$) is an interesting direction for future work.

One important feature of \pref{alg:egreedy_genfun} is that it only requires solving least-squares regression problems, for which numerous efficient methods exist, even when $\cF$ are large neural networks. This is in stark contrast to all known algorithms for general function approximation with worst-case sample-efficiency guarantees. Those require either solving an intricate non-convex optimization problem to ensure global optimism \citep{jiang2017contextual,dann2018oracle, jin2021bellman, du2021bilinear} or sampling from a posterior distribution without closed form \citep{dann2021provably}. The lack of computationally and statistically worst-case efficient algorithms is a strong motivation for studying the sample-efficiency of the simple and computationally tractable approach in \pref{alg:egreedy_genfun}.

We assume that the regression problem in  \pref{line:lsregression} is solved exactly for ease of presentation but our analysis can be easily extended  to allow small optimization error in each episode. 

\begin{algorithm}[t]
\label{alg:egreedy_genfun}
\SetNoFillComment
\SetInd{0.7em}{0.5em}
\SetKwInOut{Inputa}{input}
\SetKwInOut{Outputa}{ouput}
\SetKwInOut{Returna}{return}
  \SetKwProg{Fn}{function}{:}{}
  \SetNlSty{bfseries}{\color{black}}{}
\Inputa{function class $\cF = \cF_1 \times \dots \times \cF_{H+1}$}
\Inputa{myopic exploration mapping $\myopic \colon \cF \rightarrow \Pi$}
Initialize datasets $\cD_h \gets \varnothing$ for all $h \in [H]$\;
\For{episode $t = 1, 2, \dots$}{
Set $\wh f_{t, H+1} = 0$\;
\For{$h = H, \dots, 1$}{
     Fit Q-function with least-squares regression $\wh f_{t, h} \gets \argmin_{f \in \cF_h} \cL_h(f, \wh f_{t, h+1})$ where \label{line:lsregression}
    \begin{align*}
        \cL_h(f, f') = \hspace{-5mm}\sum_{(x,a,r, x') \in \cD_h}\hspace{-5mm} \prn[\Big]{f(x,a) - r - \max_{a'} f'(x', a')\!}^2
    \end{align*} 
}
Set myopic exploration policy $\wt \pi_t \gets \myopic(\wh f_t)$\;
 Sample one episode $\{x_1, a_1, r_1, \dots x_{H+1}\}$ with $\wt \pi_t$\;
Add observations to datasets $\cD_{h} \gets \cD_h \cup \{(x_h, a_h, r_h, x_{h+1})\}$ for all $h \in [H]$\;
}
\caption{RL with myopic exploration} 
\end{algorithm}

\section{Myopic Exploration Gap}
In this section, we introduce a new complexity measure called \textit{myopic exploration gap} in order to formalize the sample-efficiency of myopic exploration. One important feature of this quantity is that it depends both on the dynamics and the reward structure of the MDP. This enables us to give tighter guarantees in the presence of favorable rewards, a key property absent in existing analyses of myopic exploration.

The sample complexity and regret of any (reasonable) algorithm depends on a notion of gap or suboptimality, either of individual actions or entire policies, as shown by existing works on provably sample efficient RL \citep{simchowitz2019non, dann2021beyond, wagenmaker2021beyond}. Intuitively, these algorithms need to test whether a candidate value function $f \in \cF$ is optimal, and typically value functions that have large instantaneous regret $V^{\star}(x_1) - V^{\pi^f}(x_1)$ require fewer samples to be ruled out as being the optimal. This is also the case for myopic exploration based algorithms but the suboptimality gap alone is not sufficient to quantify their performance. This is because:
\begin{enumerate}[label=\((\alph*)\), topsep=0pt, itemsep=0mm]     \item Myopic exploration policies such as $\epsilon$-greedy mostly collect samples  in the ``neighborhood" of the greedy policy. However, the optimal policy may visit high-reward state-action pairs that are outside of this neighborhood and thus have low probability of being visited under the exploration policy. In such cases, to discover the high reward states and to identify the gap, the agent thus needs to collect a very large number of episodes. Essentially, the effective size of the gap is reduced.
    \item The agent is not required to estimate the suboptimality gap w.r.t. the optimal policy \(\pi^\star\) in order to rule out a candidate policy $\pi$ -- it only needs to find a better policy $\pi'$. If there is a such a $\pi'$ in the ``neighborhood" of $\pi$, then the exploration policy may be more effective at identifying the optimality gap  between $\pi$ and $\pi'$ even though it is smaller than the optimality gap between $\pi$ and $\pi^\star$.
\end{enumerate}
We address both of these issues in our definition of myopic exploration gap for $f \in \cF$ by (1) considering the gap $V_1^{\pi'}(x_1) - V_1^{\pi^f}(x_1)$ between $\pi^f$ and any other policy $\pi'$, and (2) normalizing this gap by a scaling factor $c$. This factor, which we call \emph{myopic exploration radius}, measures the effectiveness of the exploration policy $\myopic(f)$ at collecting informative samples for identifying the gap. 

\begin{definition}[myopic exploration gap]\label{def:local_exploration_gap}
Given a function class $\cF$, exploration mapping $\myopic \colon \cF \rightarrow \Pi$, MDP $\cM$ and policy class $\Pi'$, we define the myopic exploration gap $\legap(f, \cF, \Pi',  \myopic, \cM)$ of $f \in \cF$ as the value of
\begin{align}
\begin{split}
\label{eq:local_gap} 
    \sup_{\pi' \in \Pi', c \geq 1} \frac{1}{\sqrt{c}} (V_1^{\pi'}(x_1)& - V_1^{\pi^f}(x_1))\\ 
    \qquad \textrm{such that for all } & f' \in \cF \textrm{ and }h \in [H],\\
    \bbE_{\pi'}[(\cE_h^2 f')(x_h, a_h)] 
    &\leq c
    \bbE_{\myopic(f)}[(\cE_h^2 f')(x_h, a_h)] \\
    \bbE_{\pi^f}[(\cE_h^2 f')(x_h, a_h)] 
    &\leq c  
    \bbE_{\myopic(f)}[(\cE_h^2 f')(x_h, a_h)].
    \end{split} 
\end{align}
Furthermore, we define the myopic exploration radius $\lerad(f, \cF, \Pi',  \myopic, \cM)$ as the smallest value of $c$ that attains the maximum in $\eqref{eq:local_gap}$. To simplify the  notation, we omit the dependence on $\myopic$ and $\cM$. When $\Pi' = \Pi_{\cF}$ we also omit the dependence on $\Pi'$ and use the notation $\legap(f, \cF)$ and $\lerad(f, \cF)$ respectively. 
\end{definition}

The constraints in  \pref{eq:local_gap} determine how small the normalization of the gap can be chosen. We compare the expected squared Bellman errors $\cE_h^2 f'$ under different distributions. To see why, consider the following: least squares value iteration in \pref{alg:egreedy_genfun} minimizes the squared Bellman error on the dataset collected by exploration policies, which is closely related to the RHS in the constraints. Additionally, our analysis relates the LHS of the constraints to the gap\(-\)if $\cE_h^2 f$ is small for all $h$ under for both $\bbE_{\pi'}$ and $\bbE_{\pi^f}$, then the gap $V_1^{\pi'}(x_1) - V_1^{\pi^f}(x_1)$ cannot be large. Therefore, $c$ measures how effective the exploration policy is at identifying the gap between $\pi^f$ and $\pi'$.

\paragraph{Alternative Form in Tabular MDPs.} The definition of myopic exploration gap in \pref{eq:local_gap} only involves expectations w.r.t. policies induced by $\cF$, which is desirable in the rich observation setting. However, in tabular MDPs where the number of states $S$ and actions $A$ is finite and $\cF$ is the entire Q-function class with $N_{\cF}(\lambda) \approx (A/\lambda)^{SAH}$, it may be more convenient to have constraints on individual state-action pairs instead. Since the class of squared Bellman errors $\cE_h^2 \cF)$ for tabular representations is rich enough to include functions that are nonzero in a single state-action pair, we can write \pref{eq:local_gap} with $SAH$ constraints on the corresponding occupancy measures in tabular problems:
\begin{align}
\begin{split}
\label{eqn:local_gap_tabular}
    \sup_{\pi' \in \Pi', c \geq 1} \frac{1}{\sqrt{c}}& (V_1^{\pi'}(x_1) - V_1^{\pi^f}(x_1))\\ 
    \textrm{such that for all } & (x, a) \in \cX \times \cA \textrm{ and }h \in [H],\\
    \mu^{\pi'}_{h}(x,a) &\leq c \mu^{\myopic(f)}_{h}(x, a) \\
    \mu^{\pi^f}_{h}(x,a) &\leq c \mu^{\myopic(f)}_{h}(x, a). 
    \end{split}
\end{align}

While determining the exact value of $\alpha(f, \cF)$ is challenging due to its intricate dependence on the MDP dynamics, chosen function class and the exploration policy, we can often provide meaningful and interpretable bounds.

\section{When~are~Myopic~Exploration~Gaps~Large?}
\label{sec:local_exploration_gap} 
We now discuss various structural properties of the transition dynamics and rewards under which the myopic exploration gap is large, and thus myopic exploration converges quickly. 
\subsection{Favorable Transition Dynamics}
Structure in the transition dynamics can make myopic exploration approaches effective. To understand how effective is the myopic exploration gap for those cases, first note that $\alpha(f, \cF)$ can never be larger than the optimality gap of $\pi^f$ (by definition). However, as shown in the following lemma, it is also lower bounded when the state-space distribution induced by $\myopic(f)$ is close to that of any other policy.
\begin{restatable}{lemma}{legapgeneral}
\label{lem:legap_general_bound} 
When $Q^\star \in \cF$, the myopic exploration gap is bounded for all $f \in \cF$ as
\begin{align*}
\prn[\Big]{\max_{\pi' \in \Pi'} \nrm[\Big]{ \frac{\bbP_{\pi'}}{\bbP_{\myopic(f)}}}_{\infty}}^{-\frac{1}{2}}
\leq 
\frac{\legap(f, \cF)}
{V_1^\star(x_1) - V^{\pi^f}_1(x_1)} \leq 1. 
\end{align*} Furthermore, the myopic exploration radius is bounded as $\lerad(f, \cF)
\leq \max_{\pi' \in \Pi'} \left\| \frac{\bbP_{\pi'}}{\bbP_{\myopic(f)}}\right\|_{\infty}$.
\end{restatable}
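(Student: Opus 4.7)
The plan is to prove the three bounds (upper bound on $\alpha$, lower bound on $\alpha$, upper bound on $c$) via the feasibility test in \pref{def:local_exploration_gap}. I will assume throughout that $\Pi'$ is the policy class appearing in $\alpha(f, \cF)$, i.e., $\Pi' = \Pi_{\cF}$, which contains $\pi^\star$ under \pref{ass:completeness} since $Q^\star \in \cF$.

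\textbf{Upper bound.} This is the easy direction and follows directly from the definition. For any feasible pair $(\pi', c)$ in \pref{eq:local_gap}, since $c \geq 1$ we have $1/\sqrt{c} \leq 1$, and since $\pi' \in \Pi_{\cF}$ we have $V_1^{\pi'}(x_1) - V_1^{\pi^f}(x_1) \leq V_1^{\star}(x_1) - V_1^{\pi^f}(x_1)$. Taking supremum gives $\alpha(f,\cF) \leq V_1^{\star}(x_1) - V_1^{\pi^f}(x_1)$.

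\textbf{Lower bound via a canonical feasible point.} The main step is to exhibit $(\pi^\star, c_0)$ with $c_0 \ldef \max_{\pi' \in \Pi'} \|\bbP_{\pi'}/\bbP_{\myopic(f)}\|_\infty$ as feasible in \pref{eq:local_gap}. The key observation is that the marginal density ratio at each step $h$ is dominated by the trajectory density ratio: for any $\pi \in \Pi'$ and $(x,a) \in \cX \times \cA$,
\[
\frac{\mu_h^{\pi}(x,a)}{\mu_h^{\myopic(f)}(x,a)} = \frac{\bbE_{\myopic(f)}\brk[\big]{\tfrac{\bbP_{\pi}}{\bbP_{\myopic(f)}} \mathbbm{1}\{x_h = x, a_h = a\}}}{\bbP_{\myopic(f)}(x_h = x, a_h = a)} \leq \nrm[\Big]{\tfrac{\bbP_{\pi}}{\bbP_{\myopic(f)}}}_{\infty} \leq c_0,
\]
which follows because a conditional expectation of a random variable is bounded by its essential supremum. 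Applying this to $\pi = \pi^\star$ and to $\pi = \pi^f$ (both in $\Pi_{\cF}$) and integrating against the non-negative function $(\cE_h^2 f')(x,a)$ shows that both constraints in \pref{eq:local_gap} hold with $c_0$. One also has $c_0 \geq 1$ since $\bbE_{\myopic(f)}[\bbP_{\pi}/\bbP_{\myopic(f)}] = 1$, so $(\pi^\star, c_0)$ is indeed feasible.

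\textbf{Conclusion.} Feasibility of $(\pi^\star, c_0)$ immediately yields $\alpha(f,\cF) \geq \tfrac{1}{\sqrt{c_0}}(V_1^\star(x_1) - V_1^{\pi^f}(x_1))$, which is the desired lower bound, and also shows that the minimal feasible $c$ satisfies $\lerad(f, \cF) \leq c_0$. The main technical step is the marginal-from-joint density ratio bound above; everything else is direct substitution into \pref{def:local_exploration_gap}. I expect no serious obstacle: the only thing to be careful about is that $\pi^\star \in \Pi_{\cF}$ (guaranteed by realizability $Q^\star \in \cF$) and that the constraint involves non-negative integrands $(\cE_h^2 f')$, so the $L^\infty$ density-ratio bound transfers cleanly to an expectation inequality for every $f' \in \cF$.
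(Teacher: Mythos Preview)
Your proposal is correct and follows essentially the same approach as the paper: both establish the upper bound directly from $c \geq 1$ and optimality of $\pi^\star$, and both obtain the lower bound (and the bound on $\lerad$) by exhibiting $(\pi^\star, c_0)$ with $c_0 = \max_{\pi' \in \Pi'} \|\bbP_{\pi'}/\bbP_{\myopic(f)}\|_\infty$ as a feasible point in \pref{eq:local_gap} via the change-of-measure inequality $\bbE_{\pi'}[g(x_h,a_h)] \leq c_0\,\bbE_{\myopic(f)}[g(x_h,a_h)]$ for nonnegative $g$. Your version is in fact slightly more explicit than the paper's (you justify the marginal density-ratio bound and verify $c_0 \geq 1$), but the argument is the same.
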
 
Similar dependencies on the Radon-Nikodym derivative ${\bbP_{\pi'}}/{\bbP_{\myopic(f)}}$ can also be found in the form of the concentrability coefficient in error bounds for fitted Q-iteration under a single behavior policy \citep{antos2008learning}. 
The lower-bound in \pref{lem:legap_general_bound} can be used to bound the myopic exploration gap under various other assumptions, including the following worst-case lower-bound~for~$\epsilon$-greedy. 
\begin{restatable}{corollary}{coregreedygen}
\label{cor:legap_worstcase_egreedy}
For any MDP with finitely many actions with $|\cA|=A$ and $f \in \cF$, the myopic exploration gap of $\epsilon$-greedy can be bounded as
\begin{align*}
   \legap(f, \cF) \geq \left(\frac{\epsilon}{A}\right)^{\frac{H}{2}} (V^{\star}_1(x_1) - V^{\pi^f}_1(x_1))
\end{align*}
and the exploration radius is bounded as $\lerad(f, \cF) \leq \left(\frac{\epsilon}{A}\right)^H$.
\end{restatable}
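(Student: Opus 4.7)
The plan is to invoke \pref{lem:legap_general_bound} and specialize the worst-case trajectory density ratio $\max_{\pi' \in \Pi_{\cF}}\nrm{\bbP_{\pi'}/\bbP_{\myopic(f)}}_\infty$ to the $\epsilon$-greedy exploration mapping. The essential structural fact is that under $\epsilon$-greedy every action at every step has probability at least $\epsilon/A$, so no greedy target policy $\pi' \in \Pi_{\cF}$ can be too far off the exploration distribution on any trajectory.

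First, because the initial state $x_1$ is deterministic and the transitions $P_h$ do not depend on the policy, the trajectory probability under any policy $\pi$ factorizes as $\bbP_\pi(\tau) = \prod_{h=1}^H \pi_h(a_h \mid x_h) P_h(x_{h+1} \mid x_h, a_h)$. The transition kernels therefore cancel in the density ratio, leaving
\[
\frac{\bbP_{\pi'}(\tau)}{\bbP_{\myopic(f)}(\tau)} = \prod_{h=1}^H \frac{\pi'_h(a_h \mid x_h)}{[\myopic(f)]_h(a_h \mid x_h)}.
\]
Any $\pi' \in \Pi_{\cF}$ is greedy and hence deterministic, so $\pi'_h(a \mid x) \leq 1$, while by construction $[\myopic(f)]_h(a \mid x) = (1-\epsilon)\pi^f_h(a \mid x) + \epsilon/A \geq \epsilon/A$. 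Multiplying across the $H$ steps yields $\nrm{\bbP_{\pi'}/\bbP_{\myopic(f)}}_\infty \leq (A/\epsilon)^H$ uniformly over $\pi' \in \Pi_{\cF}$.

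Plugging this uniform bound into \pref{lem:legap_general_bound} delivers both conclusions at once: the radius upper bound is immediate, and for the gap lower bound we use $\bigl((A/\epsilon)^H\bigr)^{-1/2} = (\epsilon/A)^{H/2}$ to obtain $\legap(f, \cF) \geq (\epsilon/A)^{H/2}\bigl(V_1^\star(x_1) - V_1^{\pi^f}(x_1)\bigr)$, which is the claim.

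I do not anticipate a substantive obstacle: the corollary is a direct one-line specialization of \pref{lem:legap_general_bound}, using only the elementary per-step lower bound $\epsilon/A$ on action probabilities under $\epsilon$-greedy. The only mild subtlety is verifying that the trajectory density ratio cleanly factorizes into per-step action ratios, which it does because both the initial state and the transition kernels are policy-independent and therefore cancel.
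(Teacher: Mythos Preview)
Your proof is correct and follows essentially the same route as the paper: bound the trajectory likelihood ratio by factoring out the common transition kernels, use the per-step lower bound $[\myopic(f)]_h(a\mid x)\geq \epsilon/A$ together with $\pi'_h(a\mid x)\leq 1$ to obtain $\nrm{\bbP_{\pi'}/\bbP_{\myopic(f)}}_\infty \leq (A/\epsilon)^H$, and then invoke \pref{lem:legap_general_bound}. Your write-up is slightly more explicit about the factorization, but the argument is the same.
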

We can similarly derive a lower bound on the myopic exploration gap for softmax-policies, where $\epsilon$ is replaced by $e^{-\beta}$, i.e.,  
$\legap(f, \cF) \geq (A e^\beta)^{-H/2} (V^{\star}_1(x_1) - V^{\pi^f}_1(x_1))$.

The bound in \pref{cor:legap_worstcase_egreedy} is exponentially small in $H$ which is not surprising. It is well known that $\epsilon$-greedy is ineffective in some problems and the construction in the proof of \pref{thm:lower_bound} will provide a formal example of this. However, in many problems, the gap can be much larger due to favorable dynamics. 

\subsubsection{Small Covering Length and Coverage}
\label{sec:coverlen}

We first turn to insights from prior work and show that our framework captures the favorable cases that they identified.
\citet{liu2018simple} investigated a related question to ours\(-\)when is myopic exploration sufficient for PAC learning with polynomial bounds in tabular MDPs. However, their work focuses on the infinite horizon setting and only considers explore-then-commit Q-learning where all the exploration is done using a uniformly random policy. They identify conditions under which the covering length of their exploration policy is polynomially small, a quantity that  governs the sample-efficiency of Q-learning with a fixed exploration policy \citep{even2003learning}. Covering length is typically used in infinite-horizon MDPs but we can transfer the concept to our episodic setting as well: 
\begin{definition}
The covering length $L(\pi)$ of a policy $\pi \in \Pi$ is the number of episodes until all state-action pairs have been visited at all time steps with probability at least $1/2$.
\end{definition}
Lower bounding $L(\pi)$ can be thought of as a variant of the popular coupon collector's problem where each object has a different probability. Since the agent has to obtain at least one sample from each $(x, a, h)$ triplet with probability $1/2$, and the probability to receive such a sample is $\mu^{\pi}_h(x, a)$ in each episode, the covering length must scale as
\begin{align*}
    L(\pi) &= 
    \Omega\prn[\Big]{ \frac{1}{\min_{x, a, h} \mu^{\pi}_h(x, a)}}. 
\end{align*}
Intuitively, we expect learning to be efficient if $\min_{x, a, h} \mu^{\pi}_h(x, a)$ is large for the data collection policy $\pi$, since the agent is able to collect samples from the entire state-action space. Note that   
\begin{align*}
    \max_{\pi' \in \Pi'} \left\| \frac{\bbP_{\pi'}}{\bbP_{\myopic(f)}}\right\|_{\infty}
    &\leq \left\| \frac{1}{\bbP_{\myopic(f)}}\right\|_{\infty}
    \leq \frac{1}{\min_{x, a, h} \mu^{\pi}_h(x, a)}. 
\end{align*}
Plugging the above bound in \pref{lem:legap_general_bound} we get that 
\begin{align*}
    \legap(f, \cF) &\geq \sqrt{\min_{x, a, h} \mu^{\pi}_h(x, a)} (V^{\star}_1(x_1) -  V^{\pi^f}_1(x_1))\\
    &= \Omega(L(\myopic(f))^{-1/2}) (V^{\star}_1(x_1) - V^{\pi^f}_1(x_1)).
\end{align*}
This shows that myopic exploration gap is  large  when the covering length is small, and thus our definition recovers the prior results based on covering length. 

\subsubsection{Small Action Variation} 
One expects myopic exploration to be effective in MDPs where the transition dynamics have almost identical next-state distributions corresponding to taking different actions at any given state. In such MDPs, exploration is easy as all the policies will have similar occupancy measures.

Prior works \cite{jiang2016structural, liu2018simple} have considered an additive notion of action variation that bounds the \(L_1\) distance between the next-state distributions corresponding to different actions. However, for our applications, a multiplicative version of action variation is better suited. 

\begin{definition}[Multiplicative Action Variation] \label{def:multiplicative_action_variation} For any MDP with state space \(\cX\), action space \(\cA\) and transition dynamics \(P\), define multiplicative action variation as the minimum \(\delta_P\) such that for any \(x, a, a'\) and \(h\), 
\begin{align*} 
\left\|\frac{\dif P_h(x, a)}{\dif P_h(x, a')}\right\|_\infty \leq \delta_P.
\end{align*}
\end{definition} 

The above definition implies that for any state \(x\), actions \(a\) and \(a'\), and next state \(x'\), the transition probability \(P_h(x' \mid x, a)\) is bounded by \(\delta_P P_h(x' \mid x, a')\). Thus, by symmetry we always have that \(\delta_P \geq 1\). The smaller the value of \(\delta_P\), the better we expect myopic exploration to perform. 
\begin{restatable}{lemma}{multiactionvar}
\label{lem:multiplicative_av_bound}
Let \(\cM\) be any MDP with multiplicative action variation \(\delta_P\), and \(Q^* \in \cF\). Then, for any \(f \in \cF\), the myopic exploration gap of \(\epsilon\)-greedy satisfies 
\begin{align*} 
\alpha(f, \cF) \geq \sqrt{\frac{\epsilon}{A \delta_P^H}} \cdot (V_1^{\star}(x_1) - V_1^{\pi^f}(x_1)).
\end{align*}
Further, \(\forall f \in \cF\), the exploration radius $\lerad(f, \cF) \leq {A \delta_P^H}/{\epsilon}$.
\end{restatable}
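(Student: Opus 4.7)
The plan is to work directly with \pref{def:local_exploration_gap} and exhibit a single admissible $(\pi', c)$ pair that realizes both claims. Since $Q^\star \in \cF$ by realizability, $\pi^\star \in \Pi_{\cF}$, so I would take $\pi' = \pi^\star$, which gives $V_1^{\pi'}(x_1) - V_1^{\pi^f}(x_1) = V_1^\star(x_1) - V_1^{\pi^f}(x_1)$. It then suffices to show that $c = A\delta_P^H/\epsilon$ satisfies both constraints of \pref{eq:local_gap} for every $f' \in \cF$ and every $h \in [H]$; dividing the value gap by $\sqrt{c}$ yields the lower bound on $\alpha(f, \cF)$, and the same $c$ upper-bounds the exploration radius.

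Since $(\cE_h^2 f')(x, a)$ is nonnegative, a sufficient condition for both constraints is the pointwise occupancy domination $\mu_h^{\pi}(x, a) \leq c \cdot \mu_h^{\myopic(f)}(x, a)$ for $\pi \in \{\pi^\star, \pi^f\}$. I would split this into two ingredients. First, since $\myopic(f)_h(a|x) \geq \epsilon/A$ for $\epsilon$-greedy, one has $\mu_h^{\myopic(f)}(x, a) \geq (\epsilon/A)\, \mu_h^{\myopic(f)}(x)$, while trivially $\mu_h^{\pi}(x, a) \leq \mu_h^{\pi}(x)$. Second, I claim the state-marginal bound $\mu_h^{\pi}(x) \leq \delta_P^{h-1}\, \mu_h^{\myopic(f)}(x)$ for any Markov policy $\pi$; combining these yields $\mu_h^{\pi}(x, a) \leq (A\delta_P^{h-1}/\epsilon)\, \mu_h^{\myopic(f)}(x, a) \leq (A\delta_P^H/\epsilon)\, \mu_h^{\myopic(f)}(x, a)$, as desired.

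The state-marginal comparison is the heart of the argument and where \pref{def:multiplicative_action_variation} is used essentially. For any Markov policy $\pi$, I would introduce the action-averaged transition kernel $\bar P_s^{\pi}(x' \mid x) \ldef \sum_a \pi_s(a|x)\, P_s(x' \mid x, a)$, so that $\mu_h^{\pi}(x) = \sum_{x_{1:h-1}} \prod_{s<h} \bar P_s^{\pi}(x_{s+1} \mid x_s)\, \mathbbm{1}\{x_h = x\}$. The pointwise bound $P_s(x' \mid x, a) \leq \delta_P\, P_s(x' \mid x, a')$, after averaging $a$ against $\pi_s(\cdot|x)$ on the left and $a'$ against $\myopic(f)_s(\cdot|x)$ on the right, becomes the one-step inequality $\bar P_s^{\pi}(x' \mid x) \leq \delta_P\, \bar P_s^{\myopic(f)}(x' \mid x)$, and telescoping across the $h-1$ transitions from $x_1$ to $x_h$ delivers the state-marginal claim.

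The main subtle point is that applying \pref{lem:legap_general_bound} via the trajectory-level ratio $\|\bbP_{\pi'}/\bbP_{\myopic(f)}\|_\infty$ would only reproduce the worst-case $(A/\epsilon)^H$ bound from \pref{cor:legap_worstcase_egreedy}, with no $\delta_P$-improvement. The gain arises because multiplicative action variation is a statement about the action-averaged kernel, and its benefit is visible only after marginalizing the actions out at each step; this separates one $A/\epsilon$ factor (coming from the terminal action $a_h$) from the $H$ state transitions, each of which then contributes only $\delta_P$ rather than $A/\epsilon$.
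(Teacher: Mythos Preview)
Your proposal is correct and follows essentially the same route as the paper's proof: both exhibit the feasible pair $(\pi',c)=(\pi^\star,\,A\delta_P^H/\epsilon)$ in \pref{eq:local_gap} by establishing the pointwise occupancy bound $\mu_h^{\pi}(x,a)\leq (A\delta_P^{H}/\epsilon)\,\mu_h^{\myopic(f)}(x,a)$ via a $\delta_P$-factor per transition step (from \pref{def:multiplicative_action_variation}) and a single $A/\epsilon$-factor at the terminal action (from the $\epsilon$-greedy lower bound). The only difference is cosmetic\textemdash you package the per-step comparison through the action-averaged kernel $\bar P_s^{\pi}$, whereas the paper manipulates the explicit sums in \pref{eq:mu_definition} directly\textemdash and your closing remark about why \pref{lem:legap_general_bound} alone would not yield the $\delta_P$ improvement is a helpful clarification not present in the paper.
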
 

A consequence of \pref{lem:multiplicative_av_bound} is that when the multiplicative action variation is  small, in particular when \(\delta_H \leq 1 + 1/H\), we have \(\alpha(f, \cF) \geq \sqrt{{\epsilon}/{e A}} (V_1^{\pi^\star}(x_1) - V_1^{\pi^f}(x_1))\) and thus myopic exploration converges quickly (see \pref{thm:sample_complexity}). An extreme case of this is when the next-state distributions do not depend on the chosen action at all and thus \(\delta_P = 1\). This represents the contextual bandit setting. 
\begin{corollary}[MDPs with contextual bandit structure] 
\label{cor:contextual_bandits}
Consider MDPs where actions do not affect the next-state distribution, i.e., $P_h(\cdot \mid x, a) = P_h(\cdot \mid x, a')$ for all $x \in \cX, (a, a') \in \cA^2, h \in [H]$. Then, for any \(f \in \cF\), the myopic exploration gap of $\epsilon$-greedy satisfies
\begin{align*}
   \legap(f, \cF) \geq \sqrt{\frac{\epsilon}{A}} \cdot (V^{\star}_1(x_1) - V^{\pi^f}_1(x_1)),
\end{align*} 
and the myopic exploration radius satisfies  $\lerad(f, \cF) \leq A / \epsilon$.
\end{corollary}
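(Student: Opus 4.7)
The plan is to observe that the contextual bandit setting is precisely the $\delta_P = 1$ instance of Definition~\ref{def:multiplicative_action_variation}: when $P_h(\cdot \mid x, a) = P_h(\cdot \mid x, a')$ for all $a, a'$, the Radon--Nikodym derivative $\dif P_h(x,a)/\dif P_h(x, a')$ equals $1$ almost surely, so $\delta_P = 1$. Substituting into \pref{lem:multiplicative_av_bound} yields $\delta_P^H = 1$, which gives the claimed bound on $\alpha(f, \cF)$ and on the exploration radius $c(f, \cF) \le A/\epsilon$ directly.

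For a self-contained proof I would argue as follows. Because transitions do not depend on the action, the marginal distribution of $x_h$ under any policy is the same fixed measure $\nu_h$ on $\cX$, determined purely by the transition kernels. Hence for any non-negative $g \colon \cX \times \cA \to \bbR_{\ge 0}$ and any policy $\pi$,
\begin{align*}
\bbE_{\pi}[g(x_h, a_h)] = \sum_{x \in \cX} \nu_h(x) \sum_{a \in \cA} \pi_h(a \mid x)\, g(x, a).
\end{align*}
Since $\epsilon$-greedy satisfies $[\myopic(f)]_h(a \mid x) \ge \epsilon/A$ pointwise, for any other policy $\pi'$ we have $\pi'_h(a \mid x) \le 1 \le (A/\epsilon) \cdot [\myopic(f)]_h(a \mid x)$, so
\begin{align*}
\bbE_{\pi'}[g(x_h, a_h)] \le \frac{A}{\epsilon}\, \bbE_{\myopic(f)}[g(x_h, a_h)].
\end{align*}

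Applying this with $g = \cE_h^2 f'$ for every $f' \in \cF$ and $h \in [H]$, and noting that $\pi^f$ is itself a valid choice for $\pi'$, both constraints in \eqref{eq:local_gap} are satisfied with $c = A/\epsilon$. This immediately gives $\lerad(f, \cF) \le A/\epsilon$, and taking $\pi' = \pi^\star$ in the supremum yields
\begin{align*}
\alpha(f, \cF) \ge \frac{1}{\sqrt{A/\epsilon}} \bigl(V_1^{\star}(x_1) - V_1^{\pi^f}(x_1)\bigr) = \sqrt{\frac{\epsilon}{A}}\bigl(V_1^{\star}(x_1) - V_1^{\pi^f}(x_1)\bigr).
\end{align*}
There is no real obstacle here: the key simplification is that action-independent transitions force every policy to induce the same state marginals, so the worst-case density ratio reduces to the action-marginal ratio, which is controlled uniformly by the $\epsilon/A$ floor on the exploration distribution.
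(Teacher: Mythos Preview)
Your proposal is correct and matches the paper's approach: the corollary is presented precisely as the $\delta_P = 1$ instance of \pref{lem:multiplicative_av_bound}, which is exactly your first paragraph. Your additional self-contained argument is also valid and is essentially the proof of \pref{lem:multiplicative_av_bound} streamlined to this special case---since action-independent transitions make all state marginals $\nu_h$ identical across policies, the occupancy-measure chain in that lemma collapses to the single action-ratio bound $\pi'_h(a\mid x)\le (A/\epsilon)\,[\myopic(f)]_h(a\mid x)$.
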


\subsection{Favorable Rewards}
In the previous section, we discussed several conditions under which the dynamics make an MDP conducive to efficient myopic exploration, independent of rewards. However the reward structure can also significantly impact the efficiency of myopic exploration. Empirically, this has been widely recognized \citep{mataric1994reward}. The general rule of thumb is that dense-reward problems, where the agent receives a reward signal frequently, are easy for myopic exploration, and sparse-reward problems are difficult. Our myopic exploration gap makes this rule of thumb precise and quantifies when exactly dense rewards are helpful.
We illustrate this for $\epsilon$-greedy exploration in the following example.

\subsubsection{Grid World Nagivation Example}
\begin{figure}
    \centering
    \includegraphics[width=\linewidth]{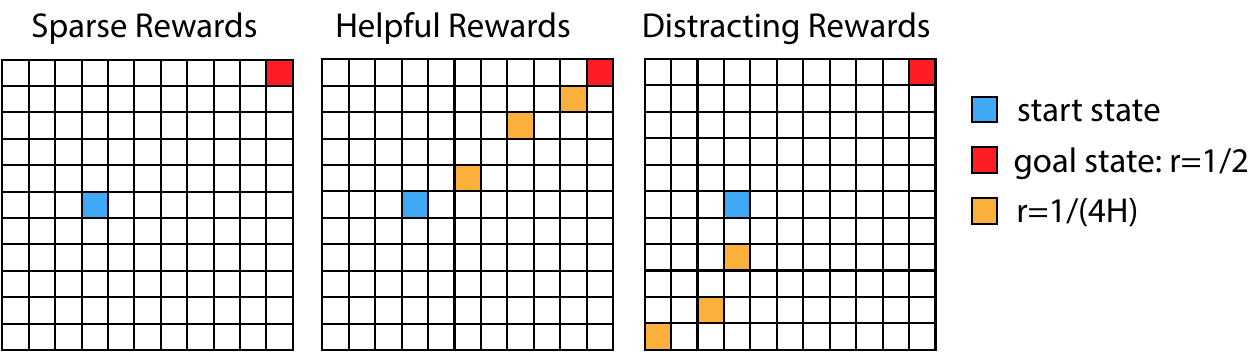}
    \caption{Three grid-world tasks with the same dynamics and optimal policy but different rewards. The optimal policy always moves from the start (blue) to the goal (red). Left: only reward of $1$ at the goal. Middle: additional small rewards help myopic exploration. Right: additional small rewards hurt myopic exploration.}
    \label{fig:effect_of_rewards}
\end{figure}

Consider a grid-world navigation task with deterministic transitions. The agent is supposed to move from the start to the goal state. \pref{fig:effect_of_rewards} depicts this task with 3 different reward functions. The horizon is chosen so that the agent can reach the goal only without any detour ($H = 13$ here).

\paragraph{Sparse Rewards.} In the version on the left, the agent only receives a non-zero reward when it reaches the goal for the first time. This is a sparse reward signal since the agent requires many time steps (roughly $H$) until it receives an informative reward. Exploration with $\epsilon$-greedy thus requires $O(A^H)$ many samples in order to find an optimal policy. Our results confirm this, since  there are many suboptimal greedy policies $\pi^f$ for which the myopic exploration gap is 
\begin{align*}
    \alpha(f, \cF) = \frac{1}{2} \left(\frac{\epsilon}{A}\right)^{H/2}.
\end{align*}

\paragraph{Helpful Dense Rewards.} In the version in the middle, someone left breadcrumb every $B$ steps along the shortest path to the goal. The agent receives a small reward of ${1}/{2H}$ each time it reaches a breadcrumb for the first time. These intermediate rewards are helpful for $\epsilon$-greedy which performs well in this problem. Our theoretical results confirm this since  any suboptimal greedy policy has a myopic exploration gap of at least
\begin{align}\label{eqn:breadcrumb_large_gaps}
\legap(f, \cF) \geq 
\frac{1}{2H} \left(\frac{\epsilon}{A}\right)^{B/2}. 
\end{align}
This bound holds since any suboptimal policy misses out on at least one breadcrumb or the goal and it could reach that and increase its return by $\geq1/2H$ by changing at most $B$ actions. In this dense-reward setting with $B \ll H$, the corresponding myopic exploration gaps are much larger.

\paragraph{Distracting Dense Rewards.} Dense rewards are not always helpful for myopic exploration. Consider the version on the right. Here, the breadcrumbs distract the agent from the goal. In particular, $\epsilon$-greedy will learn to follow the breadcrumb trail and eventually get stuck in the bottom left corner. It would then require exponentially many episodes to discover the high reward at the goal state. We can show that the myopic exploration gaps are
\begin{align*}
\legap(f, \cF)  \begin{cases}
= \frac{1}{2} \left(\frac{\epsilon}{A}\right)^{H/2}& \textrm{if $\pi^f$ reaches all breadcrumbs}\\
\geq \frac{1}{2H} \left(\frac{\epsilon}{A}\right)^{B/2} & \textrm{otherwise}
    \end{cases}
\end{align*}
for any suboptimal greedy policy $\pi^f$. Thus, there are value functions with suboptimal greedy policies that have an exponentially small gap, similar to the sparse reward setting.

\subsubsection{Favorable Rewards Through Potential-Based Reward Shaping?}
\label{sec:reward_shaping}
Potential-based reward shaping \citep{ng1999policy, grzes2017reward} is a popular technique for finding a reward definition that preserve the optimal policies of a given reward function but may be easier to learn. Formally, we here call two average reward definitions $\bar r = \{\bar r_h\}_{h \in [H]}, \bar r' = \{\bar r'_h\}_{h \in [H]}$ with $\bar r_h, \bar r_h' \colon \cS \times \cA \rightarrow [0, 1]$  a potential-based reward shaping of each other if $\bar r_h(s,a) - \bar r'_h(s,a) = \Phi_h(s) -  \bbE[\Phi(s_{h+1}) \mid x_h = x, a_h = a]$ for all $s,a, h$ where $\Phi = \{\Phi_h\}_{h \in [H]}$ is a state-based potential function with $\Phi_1(s_{\textrm{init}}) = \Phi_{H+1}(s_{\textrm{end}}) = 0$. One can show by a telescoping sum that the total return of any policies under $\bar r$ and $\bar r'_h$ is identical. As a result, since the myopic exploration gaps  in the tabular setting only depend on the rewards through the return of policies (\pref{eqn:local_gap_tabular}), the gaps are identical under both reward functions. This suggests that the efficiency of $\epsilon$-greedy exploration is not affected by potential-based reward shaping. Given the empirical success of potential-based reward shaping, this may seem surprising. However, as we illustrate with an example in \pref{app:myopic_exploration_further}, this difference in empirical performance is due to optimistic or pessimistic initialization effects and not due to $\epsilon$-greedy exploration.

\section{Theoretical Guarantees}
\label{sec:general_funapprox}
In this section, we present our main theoretical guarantees for \pref{alg:egreedy_genfun}. At the heart of our analysis is a sample-complexity bound that controls the number of episodes in which \pref{alg:egreedy_genfun} selects a poor quality value function $\wh f_t$, for which $\pi^{\wh f_t}$ has large suboptimality. In particular, we show that for any subset of function $\cF' \subset \cF$, the number of times for which \(\wh f_t \in \cF'\) is selected scales inversely with the square of the smallest myopic exploration gap $\legap(\cF', \cF) = \inf_{f \in \cF'} \legap(f, \cF)$.

Intuitively, when \(\cF'\) contains suboptimal policies and \(\legap(\cF', \cF)\) is large, then for any $f \in \cF'$ there exists a policy $\pi' \in \Pi_{\cF}$ that achieves higher return than $\pi_f$, and the exploration policy $\myopic(f)$ will quickly collect enough samples to allow \pref{alg:egreedy_genfun} to learn this difference. Thus, such an $f$ would not be selected any further. The following theorem formalizes this intuition with the sample complexity bound. 

\begin{restatable}[Sample Complexity Upper Bound]{theorem}{thmsamplecomplex}
\label{thm:sample_complexity} Let $\delta \in (0, 1)$ and \(T \in \bbN\), and suppose  \pref{alg:egreedy_genfun} is run with a function class $\cF$ that satisfies \pref{ass:completeness}. Further, let $\cF' \subseteq \cF$ be any subset of value functions. Then, with probability at least \(1 - \delta\), the number of episodes within the first $T$ episodes where $\wh f_t \in \cF'$ is selected is bounded by
\begin{align*}
O \prn[\bigg]{ 
     \frac{\ln \lerad(\cF', \cF)}{\legap(\cF', \cF)^2}
     H^2 d \ln \prn[\Big]{\frac{\bar N_{\cF}(T^{-1})\ln T}{\delta}}}.
\end{align*} 
Here, $d = \operatorname{dim}_{\operatorname{BE}}(\cF', \Pi_{\cF}, 1/\sqrt{T})$ is the Bellman-Eluder dimension of $\cF'$, and 
$$\legap(\cF', \cF) \ldef{} \inf_{f \in \cF'} \legap(f, \cF), ~~~~
    \lerad(\cF', \cF) \ldef{} \sup_{f \in \cF'} \lerad(f, \cF)$$
with $\legap(f, \cF)$ and $\lerad(f, \cF)$ defined in \pref{def:local_exploration_gap}. 
\end{restatable}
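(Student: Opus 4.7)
The plan is to bound the number of bad rounds (those with $\hat f_t \in \cF'$) by chaining three ingredients: an in-sample least-squares guarantee for the squared Bellman error of $\hat f_t$; a value-decomposition that converts Definition~\ref{def:local_exploration_gap} into a per-round lower bound on $\sum_h \bbE_{\tilde \pi_t}[\cE_h^2 \hat f_t]$; and a distributional Bellman--Eluder pigeonhole that converts this in-sample control into a bound on the number of bad rounds.

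First, under Assumption~\ref{ass:completeness}, I would establish via a Freedman-type martingale concentration combined with a union bound over an $\ell_\infty$-cover of $\cF$ that, with probability at least $1-\delta$, for every $t \in [T]$ and $h \in [H]$,
\[ \sum_{s=1}^{t-1} \bbE_{\tilde \pi_s}\bigl[(\cE_h^2 \hat f_t)(x_h, a_h)\bigr] \leq \beta, \]
with $\beta = O\bigl(\log(\bar N_{\cF}(1/T)\, T/\delta)\bigr)$. Completeness is what makes this a bound on the squared Bellman residual itself rather than on a bias-plus-variance decomposition.

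Next, for a bad round $t$, I would unpack Definition~\ref{def:local_exploration_gap} to obtain witnesses $(\pi'_t, c_t) \in \Pi_\cF \times [1, \lerad(\cF', \cF)]$ with $V_1^{\pi'_t}(x_1) - V_1^{\pi^{\hat f_t}}(x_1) \geq \sqrt{c_t}\, \alpha(\cF', \cF)$ and both squared-Bellman-error constraints against $\tilde \pi_t = \myopic(\hat f_t)$. Combining the greedy identity $\hat f_{t,1}(x_1, \pi^{\hat f_t}_1(x_1)) - V_1^{\pi^{\hat f_t}}(x_1) = \sum_h \bbE_{\pi^{\hat f_t}}[\cE_h \hat f_t]$, the general value-decomposition inequality $V_1^{\pi'_t}(x_1) - \hat f_{t,1}(x_1, \pi'_{t,1}(x_1)) \leq -\sum_h \bbE_{\pi'_t}[\cE_h \hat f_t]$, and the greedy optimality $\hat f_{t,1}(x_1, \pi^{\hat f_t}_1(x_1)) \geq \hat f_{t,1}(x_1, \pi'_{t,1}(x_1))$ produces
\[ V_1^{\pi'_t}(x_1) - V_1^{\pi^{\hat f_t}}(x_1) \leq \sum_{h=1}^H \bigl|\bbE_{\pi'_t}[\cE_h \hat f_t]\bigr| + \sum_{h=1}^H \bigl|\bbE_{\pi^{\hat f_t}}[\cE_h \hat f_t]\bigr|. \]
Jensen's inequality together with the myopic-gap constraints then bounds each term on the right by $\sqrt{c_t\, \bbE_{\tilde \pi_t}[\cE_h^2 \hat f_t]}$. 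Canceling the common $\sqrt{c_t}$ factor yields the per-round inequality $\alpha(\cF', \cF) \leq 2 \sum_h \sqrt{\bbE_{\tilde \pi_t}[\cE_h^2 \hat f_t]}$, and one more application of Cauchy--Schwarz gives $\alpha(\cF', \cF)^2 \leq 4H \sum_h \bbE_{\tilde \pi_t}[\cE_h^2 \hat f_t]$ for every bad $t$.

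Summing over bad rounds reduces the theorem to bounding $\sum_{t=1}^T \bbE_{\tilde \pi_t}[\cE_h^2 \hat f_t]$ in terms of $d \cdot \beta$ (up to log factors) via a distributional Bellman--Eluder pigeonhole. The hard part will be that the Bellman--Eluder dimension of \citet{jin2021bellman} is phrased in first moments $(\bbE_\mu[\phi])^2$, whereas what we need is control on second moments $\bbE_\mu[\phi^2]$, and the two can differ by arbitrary factors. Jensen converts the in-sample LS bound into a valid first-moment hypothesis, but to recover second moments on the out-of-sample side I would use a dyadic peeling argument over scales of $c_t$ (equivalently, over magnitudes of the per-round Bellman error) from $1$ up to $\lerad(\cF', \cF)$, applying the first-moment Eluder pigeonhole separately in each band. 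The $O(\log \lerad(\cF', \cF))$ bands account for the $\ln \lerad$ factor in the final bound, and the two powers of $H$ come from (i) the $4H$ factor in $\alpha^2 \leq 4H \sum_h \bbE[\cE_h^2 \hat f_t]$ and (ii) summing the per-layer applications of the pigeonhole over the $H$ steps.
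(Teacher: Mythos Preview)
There is a genuine gap in the final ``Eluder pigeonhole'' step. Your per-round argument is fine up to the point $\alpha(\cF',\cF)^2 \leq 4H\sum_h \bbE_{\tilde\pi_t}[\cE_h^2 \hat f_t]$, but then you need to bound $\sum_{t\in\cK}\bbE_{\tilde\pi_t}[\cE_h^2 \hat f_t]$ by $O(d\beta)$, and neither ingredient you have lines up with the Bellman--Eluder dimension $d=\dim_{\operatorname{BE}}(\cF',\Pi_{\cF},\cdot)$ in the statement. First, the distributions $\tilde\pi_t=\myopic(\hat f_t)$ are exploration policies, not elements of $\Pi_{\cF}$, so the pigeonhole with parameter $d$ does not apply to them. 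Second, and more seriously, even granting the right distribution class, what you would be pigeonholing is $\bbE_{\mu}[\phi^2]$ for $\phi\in\cE_h\cF$, i.e.\ first moments of functions in $\cE_h^2\cF$; the Bellman--Eluder dimension is defined for $\cE_h\cF$, and the distributional Eluder dimension of $\cE_h^2\cF$ can be arbitrarily larger. Your proposed fix---peeling over $c_t$---cannot help here: you have already cancelled $c_t$ in the per-round bound, so there is nothing left to peel on, and peeling over the magnitude of $\bbE_{\tilde\pi_t}[\cE_h^2\hat f_t]$ still requires the wrong Eluder dimension.

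The paper avoids both obstacles by \emph{not} cancelling $c_t$ per round. It keeps the decomposition at the level of the first moments $|\bbE_{\pi_t}[\cE_h\hat f_t]|$ and $|\bbE_{\pi'_t}[\cE_h\hat f_t]|$, with $\pi_t,\pi'_t\in\Pi_{\cF}$, so the Eluder lemma applies directly with dimension $d$. The hypothesis of that lemma is $\sum_{\tau<t,\,\tau\in\cK_i}(\bbE_{\pi'_\tau}[\cE_h\hat f_t])^2\leq\beta'$; this is obtained by Jensen, then the myopic-gap constraint $\bbE_{\pi'_\tau}[\cE_h^2\hat f_t]\leq c_\tau\,\bbE_{\tilde\pi_\tau}[\cE_h^2\hat f_t]$ on \emph{past} rounds $\tau$, and finally the in-sample LS bound. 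Because the constant is $c_\tau$ (varying with $\tau$) rather than $c_t$, one must bucket rounds by $c_\tau\in[e^{i-1},e^i]$ to get a uniform $\beta'=e^i\beta$; that is exactly where the $\ln\lerad(\cF',\cF)$ factor enters. On the lower-bound side, $V^{\pi'_t}-V^{\pi_t}\geq \sqrt{c_t}\,\alpha(\cF',\cF)\geq \sqrt{e^{i-1}}\,\alpha(\cF',\cF)$ within bucket $i$, and the $e^{i/2}$ on both sides cancels. In short: the peeling over $c$ is not a device for converting first moments to second moments---it is needed because the importance-weight constant that transfers past squared errors to the data-collection distribution varies across rounds.
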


We defer the complete proof of \pref{thm:sample_complexity} to \pref{app:upper_bound_proofs} and provide a brief sketch in \pref{sec:sample_complexity_sketch}. 
Note that, although \pref{thm:sample_complexity} allows arbitrary subsets $\cF' \subset \cF$, the result is of interest only when $\cF'$ contains value functions for which the corresponding greedy policies are suboptimal. When an optimal policy $\pi^\star \in \Pi_{\cF'}$, we have that $\legap(\cF', \cF) = 0$ and thus, the sample complexity bound is vacuous. 

We next compare our result to the prior guarantees in RL with function approximation, specifically with the results of \citet{jin2020provably}. For any \(\lambda \geq 0\), if we instantiate \pref{thm:sample_complexity} with \(\cF'\) consisting of all the value functions that are not $\lambda$-optimal, i.e., $\cF' = \cF_\lambda = \{ f \in \cF \colon V^{\pi^f}(x_1) \leq V^\star(x_1) - \lambda \}$, we get that within the first  
\begin{align}
\scalebox{0.95}{
$\displaystyle
     \widetilde O\prn[\bigg]{\frac{\ln \lerad(\cF_\lambda, \cF)}{\legap(\cF_\lambda, \cF)^2}  H^2 d \ln \prn[\Big]{\bar N_{\cF}\prn[\Big]{\frac{\legap(\cF_\lambda, \cF)^2}{H^2 d\ln\lerad(\cF_\lambda, \cF)}}}\!}$}
     \label{eq:sample_complex_comparison}
\end{align}
episodes at least a constant fraction of the chosen greedy policies would be $\lambda$-optimal. Thus, terminating \pref{alg:egreedy_genfun} after collecting the above mentioned number of episodes, and returning the corresponding greedy policy of $\wh f_{t}$ for a uniformly random choice of \(t\) ensures that the output policy is $\lambda$-suboptimal with at least a constant probability. Our sample complexity bound in 
\pref{eq:sample_complex_comparison} matches the sample complexity of the \textsc{Golf} algorithm \citep{jin2020provably} in terms of its dependence on $H$ and $d$. However, our bound replaces their $\lambda$ dependency with the problem-dependent quantity ${\legap(\cF_{\lambda}, \cF)}/{\sqrt{\ln \lerad(\cF_\lambda, \cF)}}$ that captures the efficiency of the chosen exploration approach for learning (this dependence is tight as shown by the lower bound in \pref{thm:lower_bound}). The comparison is a bit subtle; while our sample complexity bound is typically larger than that of \citet{jin2020provably}, the provided algorithm is computationally efficient with running time of the order of  \pref{eq:sample_complex_comparison} whenever an efficient square loss regression oracle is available for the class \(\cF\). 
On the other hand,  \citet{jin2020provably} rely on optimistic planning which is typically computationally inefficient. 

We can also compare our sample complexity bound to the result of \citet{liu2018simple} for pure random exploration. \citet{liu2018simple} bound the covering length $L$ and rely on the results of \citet{even2003learning} to turn that into a sample-complexity bound for Q-learning. Their sample-complexity bound scales as $\omega\left(\frac{L^3}{\epsilon^2(1 - \gamma)^2}\right)$ while our \pref{thm:sample_complexity} in combination with \pref{sec:coverlen} gives $\tilde O\left(\frac{ L H^2 d \ln \bar N_\cF}{\epsilon^2}\right)$. A loose translation with $H \approx 1 / (1 - \gamma)$ and $d \ln \bar N_\cF \lesssim S^2A^2 \leq L^2$ shows that our results are never worse in tabular MDPs while being much more general (e.g., apply to non-tabular MDPs and capture many other favorable cases).

\subsection{Lower Bound}
We next provide a lower bound that shows that an $S / \legap(\cF, \cF)^2$ dependency in tabular MDPs is unavoidable. This suggests that Bellman-Eluder dimension (or alternative notions of statistical complexity such as Bellman-rank, Eluder-dimension, decoupling coefficient, etc., which are all bounded by $SA$ for tabular MDPs) alone are not sufficient to capture the performance of myopic exploration RL algorithms such as $\epsilon$-greedy. 

\begin{restatable}[Sample Complexity Lower Bound]{theorem}{lowerb}
\label{thm:lower_bound}
Let $\bar c = {3\sqrt{3}}/{32}$. For any given horizon $H \in \bbN$, number of states $S \geq 8$, number of actions $A \geq 2$, exploration parameter $\epsilon \in (0, 1)$ and $v \in [1, \bar c(\epsilon / A)^{H/2}]$, there exists a tabular MDP $\cM = (\cX, \cA, H, P, R)$ with $|\cA| = A$ and $|\cX| = S$ and a function class $\cF$ such that:
\begin{enumerate}[label=\((\alph*)\), topsep=0pt, itemsep=0mm] 
    \item $\legap(\cF', \cF) = \min_{f \in \cF'} \legap(f, \cF) \in \left[v \bar c \sqrt{\frac{3\epsilon}{A}},  v\right]$ where  $\cF' \subset \cF$ denotes the set of all the value functions that are at least $1/16$ suboptimal, i.e., $V^{\star}(x_1) - V^{\pi^f}(x_1) > 1/16$ for any \(f \in \cF'\); 
    \item the expected number of episodes for which  \pref{alg:egreedy_genfun} with $\epsilon$-greedy exploration does not select an $1/16$-optimal function $f \in \cF'$ is 
    \begin{align*}
        \Omega\prn[\Big]{\frac{S}{\legap(\cF', \cF)^2}}.
    \end{align*}
\end{enumerate} 
\end{restatable}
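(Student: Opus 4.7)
The plan is to construct a single hard tabular MDP together with a carefully chosen function class and to obtain the $\Omega(S/\alpha^2)$ bound via a change-of-measure / Fano argument applied to a family of reward perturbations of that MDP. Concretely, I would build a ``commit chain'' of depth $h_0$ (tuned based on $v$) in which a single distinguished action advances the agent and all other actions drop it into a zero-reward absorbing sink, followed by a ``selector layer'' of $\Theta(S)$ candidate states. The agent lands deterministically on candidate $i$ based on its action at layer $h_0{+}1$, and from each candidate a fixed terminal reward is collected; exactly one candidate $i^\star$ yields reward $1$ while all others yield $1-\Delta$ for a tunable $\Delta$. Remaining steps up to $H$ are padded with self-loops. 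Varying the pair $(h_0,\Delta)$ traces out exactly the interval $[v\bar c\sqrt{3\epsilon/A},\,v]$ in part (a).

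The function class $\cF$ will contain one ``decoy'' $f_i$ for each candidate, defined as the optimal $Q$-function of the counterfactual MDP in which $i^\star=i$. Since every Bellman image of a decoy is itself a decoy (up to boundary additions), \pref{ass:completeness} can be verified directly. All decoys agree on the commit chain and differ only in their preferred candidate at layer $h_0{+}1$, so under $\epsilon$-greedy the policy $\myopic(f_i)$ places visitation mass $\Theta((\epsilon/A)^{h_0})$ on candidate $i$ and mass $\Theta((\epsilon/A)^{h_0+1})$ on each other candidate. Using the tabular form \pref{eqn:local_gap_tabular}, the exploration-gap constraints collapse to occupancy ratios at the selector layer, yielding $\lerad(f_i,\cF) = \Theta(A/\epsilon)$ and $\legap(f_i,\cF) = \Theta(\Delta\,(\epsilon/A)^{h_0/2})$. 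Setting $\Delta$ and $h_0$ to make this quantity equal $v$ (up to the $\sqrt{3\epsilon/A}$ slack allowed by the theorem) places every $f_i$ with $i\ne i^\star$ in $\cF'$ provided $\Delta > 1/16$.

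For the lower bound, I would apply a standard information-theoretic argument over the $S{-}1$ alternative worlds obtained by relabeling $i^\star$ to some other candidate $j$. Let $\bbP_j$ denote the law of the length-$T$ transcript produced by \pref{alg:egreedy_genfun} in world $j$. The two worlds differ only in the reward at one selector state, so by the chain rule for KL divergence and the Bernoulli KL bound, $\operatorname{KL}(\bbP_{i^\star}\|\bbP_j) \le C\,T\,(\epsilon/A)^{h_0}\,\Delta^2$ for a universal constant $C$; crucially this visitation rate $(\epsilon/A)^{h_0}$ is essentially identical across all decoys because they share the commit chain, so the KL bound is data-independent. A Fano/Le Cam inequality then requires $T\,(\epsilon/A)^{h_0}\,\Delta^2 = \Omega(\log S)$ before any algorithm can identify $i^\star$ with constant probability averaged over the $S$ choices, and in each of the earlier episodes the algorithm necessarily selects some $f_i\in\cF'$ in expectation. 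Substituting $\alpha = \Theta(\Delta\,(\epsilon/A)^{h_0/2})$ yields $T = \Omega(S/\alpha^2)$. The main obstacle is that \pref{alg:egreedy_genfun} does not pick a decoy by its empirical return but by a global least-squares fit, so samples collected at candidate $j$ could in principle update beliefs about other candidates; the construction must therefore keep the candidates structurally orthogonal within $\cF$ (each candidate sitting in its own disjoint subtree with no shared feature, so that the Bellman target at one candidate depends only on that candidate's decoy) while still satisfying \pref{ass:completeness} and without inflating the Bellman-Eluder dimension beyond $\Theta(S)$.
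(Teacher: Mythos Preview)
Your proposal has a genuine gap and diverges substantially from the paper's argument.

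\textbf{The Fano step only yields $\log S$, not $S$.} You write that a Fano/Le Cam inequality gives $T\,(\epsilon/A)^{h_0}\,\Delta^2 = \Omega(\log S)$ and then claim that substituting $\alpha = \Theta(\Delta(\epsilon/A)^{h_0/2})$ produces $T = \Omega(S/\alpha^2)$. But the substitution gives $T = \Omega(\log S/\alpha^2)$, not $\Omega(S/\alpha^2)$: Fano over $S$ hypotheses buys you only a logarithmic factor. A single selector layer with $S$ candidates is a best-arm identification problem, and distinguishing one hidden arm among $S$ costs $\log S$, not $S$, samples in the relevant state. To obtain a linear-in-$S$ lower bound you need $\Theta(S)$ \emph{independent} hard subproblems that each must be solved, which is exactly what the paper does: it takes the two-state chain instance and places $S/2$ disjoint copies of it under a uniform initial distribution, then argues that a constant fraction of copies are simultaneously ``badly initialized'' and each individually requires $\Omega((A/\epsilon)^{h^\star})$ episodes.

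\textbf{Internal inconsistency in the gap/radius computation.} You assert both $\lerad(f_i,\cF)=\Theta(A/\epsilon)$ and $\legap(f_i,\cF)=\Theta(\Delta(\epsilon/A)^{h_0/2})$. These are incompatible: by definition $\legap = (\text{value gap})/\sqrt{\lerad}$, so $\lerad = \Theta(A/\epsilon)$ forces $\legap = \Theta(\Delta\sqrt{\epsilon/A})$ with no dependence on $h_0$. The source of the confusion is your visitation claim: if the decoys are the optimal $Q$-functions in their respective counterfactual worlds and ``agree on the commit chain,'' then their greedy policies \emph{advance} through the chain, so $\myopic(f_i)$ reaches the selector layer with mass $\Theta((1-\epsilon)^{h_0})$, not $(\epsilon/A)^{h_0}$. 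Either the occupancy analysis or the definition of the decoys is wrong as written.

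\textbf{How the paper proceeds instead.} The paper does not use any information-theoretic argument. It builds a two-state chain where staying on the good chain for $h^\star$ steps yields a single \emph{stochastic} Bernoulli$(3/8)$ reward, while deviating earns small deterministic rewards. It then exploits that \pref{alg:egreedy_genfun} is under-specified at unvisited state-action pairs and \emph{assumes a particular initialization} (one sample from every $(x,a)$, or equivalently a restricted $\cF$). With constant probability that Bernoulli sample is $0$, making the greedy policy strictly prefer leaving the chain; because everything else is deterministic, the least-squares fit cannot change until the agent revisits $(s_g,1)$ at step $h^\star$, which under $\epsilon$-greedy happens with probability $(\epsilon/A)^{h^\star}$ per episode. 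The $S$ factor comes from $S/2$ parallel copies and a Hoeffding bound showing $\Omega(S)$ of them are simultaneously badly initialized. This is a direct, algorithm-specific trajectory argument rather than a change-of-measure bound.
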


\subsection{Regret Bound for $\epsilon$-Greedy RL}
Equipped with the sample complexity bound in \pref{thm:sample_complexity}, we can derive regret bounds for myopic exploration based RL. In the following, we derive regret bounds for $\epsilon$-greedy algorithm. Note that the regret can be decomposed as  
\begin{align*}
    &\regret(T) = \sum_{t = 1}^T \brk{V_1^\star(x_1) - V^{\wt \pi_t}_1(x_1)}\\
    &= \sum_{t = 1}^T \brk{V_1^\star(x_1) - V^{\pi_t}_1(x_1)} 
    + \sum_{t = 1}^T \brk{V_1^{\pi_t}(x_1) - V^{\wt \pi_t}_1(x_1)}
\end{align*}
where $\pi_t = \pi^{\wh f_t}$ denotes the greedy policy at episode \(t\). The second term in the above decomposition is bounded by $\epsilon H T$ since the return of greedy and exploration policy can differ at most by $\epsilon H$ in each episode.
The first term denotes the regret of the corresponding greedy policies, which can be  controlled by invoking \pref{thm:sample_complexity} to bound the number of episodes for which the greedy policies are suboptimal. For  favorable learning tasks in which every $f \in \cF$ with a suboptimal greedy policy has a significant myopic exploration gap, we can simply invoke \pref{thm:sample_complexity} with $\cF' = \cF^{\operatorname{sub}} = \{ f \in \cF \colon \pi^{f} \textrm{is not optimal}\}$. For illustration, consider the learning task in \pref{fig:effect_of_rewards} (middle) where gaps are large (cf. \pref{eqn:breadcrumb_large_gaps}). In this case,  
\begin{align*}
\regret(T) &= \epsilon H T + \widetilde O\prn[\Big]{\frac{H^2 d}{\alpha(\cF^{\operatorname{sub}}, \cF)^2}} \\
&= \epsilon H T +  \widetilde O\prn[\Big]{\frac{SA^{1 + B }H^4}{\epsilon^{B}}},
\end{align*}
Setting $\epsilon \approx A \left({SH^3}/{T}\right)^{{1}/(B+1)}$ thus  yields the bound 
\begin{align*}
 \regret(T) = \widetilde O\prn[\big]{HA T^{\frac{B}{B+1}} (SH^3)^{\frac{1}{B+1}}}~.
\end{align*}
Clearly, as shown by the above example, the optimal choice of $\epsilon$ (and thus the regret) depends on how the myopic exploration gap scales with $\epsilon$. We formalize this dependence in the following theorem.
\begin{restatable}[Regret Bound of $\epsilon$-Greedy]{theorem}{egreedyregret}
\label{thm:regret_egreedy}
Let $T \in \bbN$ and 
suppose we run \pref{alg:egreedy_genfun} with $\epsilon$-greedy exploration and a function class $\cF$ that satisfies \pref{ass:completeness}. 
Further, let there be a $h \in [1, H]$ such that for any $\lambda \in [0,1]$, 
$\legap(\cF'_{\lambda}, \cF) \geq \Omega((\epsilon / A)^{h/2} \lambda)$
where $\cF'_{\lambda} \subset \cF$ denotes the set of all the value functions that are at least $\lambda$ suboptimal. Then, with probability at least $1 - \delta$,  we have,
\begin{align*}
     \operatorname{Reg}(T) \leq \epsilon H T + \widetilde  O \prn[\Big]{
    \sqrt{\frac{h A^h d H^3 T}{\epsilon^h} \ln \frac{\bar N_{\cF}(T^{-1})}{\delta}}},
\end{align*}
where $d = \operatorname{dim}_{\operatorname{BE}}(\cF, \Pi_{\cF}, 1/\sqrt{T})$ denotes the Bellman-Eluder dimension of $\cF$. Furthermore, setting the exploration parameter $\epsilon = \widetilde \Theta\prn[\big]{\prn[\big]{\frac{h H A^h d}{T}}^{\frac{1}{2 + h}}}$, we get that 
\begin{align*}
   \operatorname{Reg}(T) \leq  \widetilde O\prn[\Big]{ H^{\frac{h+3}{h+2}} T^{\frac{h+1}{h+2}} \prn[\big]{h A^h d  \ln \frac{\bar N_{\cF}(T^{-1})}{\delta}}^{\frac{1}{h+2}}}. 
\end{align*}
\end{restatable}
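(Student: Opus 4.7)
The plan is to combine the regret decomposition that the author sketches immediately above the theorem statement with the sample-complexity bound of \pref{thm:sample_complexity} applied at every suboptimality scale. First I would write
\[
\regret(T) = \sum_{t=1}^T [V_1^\star(x_1) - V_1^{\pi_t}(x_1)] + \sum_{t=1}^T [V_1^{\pi_t}(x_1) - V_1^{\wt\pi_t}(x_1)]
\]
with $\pi_t = \pi^{\wh f_t}$, and let $\Delta_t \ldef V_1^\star(x_1) - V_1^{\pi_t}(x_1)$ denote the per-episode greedy suboptimality. The second (perturbation) sum is bounded directly by $\epsilon H T$, since $\wt\pi_t$ disagrees with $\pi_t$ at each of the $H$ steps with probability at most $\epsilon$ and per-episode reward lies in $[0,1]$; the in-expectation-to-high-probability step is absorbed in the $\wt O$ notation via Azuma--Hoeffding.

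For the greedy sum $\sum_t \Delta_t$, I would introduce the gap profile $N(\lambda) \ldef |\{t \leq T : \Delta_t \geq \lambda\}|$. Because $\pi_t$ is $\lambda$-suboptimal precisely when $\wh f_t \in \cF'_\lambda$, I would invoke \pref{thm:sample_complexity} with $\cF' = \cF'_\lambda$ and use the hypothesis $\legap(\cF'_\lambda, \cF) \gtrsim (\epsilon/A)^{h/2}\lambda$ to obtain, with probability at least $1-\delta$,
\[
N(\lambda) \leq \wt O\prn[\Big]{\tfrac{H^2 d A^h}{\epsilon^h\lambda^2}\ln \tfrac{\bar N_\cF(T^{-1})}{\delta}}.
\]
Since the right-hand side is monotone in $\lambda$, a union bound over the dyadic grid $\lambda_k = 2^{-k}$ for $k \leq \log_2 T$ makes this estimate simultaneous in $\lambda$ at the cost of only a $\log T$ factor.

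Finally, I would convert the gap profile into a regret bound using the layer-cake identity $\sum_t \Delta_t = \int_0^1 N(\lambda)\,\dif\lambda$, together with $N(\lambda) \leq \min\{T, \wt O(H^2 d A^h/(\epsilon^h\lambda^2))\}$, and split the integral at $\lambda^\star \asymp \sqrt{H^2 d A^h/(\epsilon^h T)}$ where the two bounds balance. Both contributions then evaluate to $\wt O(\sqrt{h A^h d H^3 T/\epsilon^h})$, with the extra $\sqrt{hH}$ over the bare calculation picked up by the union-bound logs and by propagating the $H$- and $h$-dependence through the layer decomposition. Adding back the perturbation $\epsilon H T$ yields the first claimed bound, and the second form follows by minimizing $\epsilon H T + \wt O(\sqrt{h A^h d H^3 T/\epsilon^h})$ over $\epsilon$, which gives $\epsilon \asymp (hHA^h d / T)^{1/(h+2)}$ by routine algebra. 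The main technical subtlety is thus carrying out the union bound over $\lambda$-levels cleanly while keeping the exact polynomial dependencies on $H$ and $h$; everything else is algebra on top of \pref{thm:sample_complexity}.
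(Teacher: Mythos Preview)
Your approach is essentially the paper's: both decompose the regret into the greedy part and the $\epsilon H T$ perturbation cost, then invoke \pref{thm:sample_complexity} at each suboptimality scale and sum. Your layer-cake integral $\sum_t \Delta_t = \int_0^1 N(\lambda)\,\dif\lambda$ is just the continuous analogue of the paper's dyadic sum over the slices $\cF_i = \{f \in \cF : V^\star(x_1) - V^{\pi^f}(x_1) \in [2^{-i}, 2^{-i+1}]\}$, and the balancing point $\lambda^\star$ plays exactly the role of the truncation level $2^{-m}$ in the paper.

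One correction worth flagging: your displayed bound $N(\lambda) \leq \wt O\bigl(H^2 d A^h/(\epsilon^h\lambda^2)\bigr)$ is off by a factor of $H$. \pref{thm:sample_complexity} carries an explicit $\ln \lerad(\cF',\cF)$ factor, and for $\epsilon$-greedy \pref{cor:legap_worstcase_egreedy} gives $\lerad \leq (A/\epsilon)^H$, hence $\ln \lerad \leq H\ln(A/\epsilon)$. The $\ln(A/\epsilon)$ is polylogarithmic and legitimately hidden in $\wt O$, but the $H$ is polynomial and is not, yielding $N(\lambda) \leq \wt O\bigl(H^3 d A^h/(\epsilon^h\lambda^2)\bigr)$. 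That is where the missing $\sqrt{H}$ actually comes from---not from the union bound over $\lambda$-levels or the layer decomposition, which only contribute polylog factors. With this fix your integral splits exactly as you describe and the rest of the algebra is correct.
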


For the contextual bandits problems, \pref{cor:contextual_bandits} implies that $\legap(\cF'_{\lambda}, \cF) \geq \Omega((\epsilon / A)^{1/2} \lambda)$ and thus \(h=1\). Plugging this in \pref{thm:regret_egreedy} gives us \(\operatorname{Reg}(T) = \widetilde{O}(H^{4/3} A^{1/3} T^{2/3})\), which matches the optimal regret bound for $\eps$-greedy for the contextual bandits problem in terms of dependence on \(T\) or \(A\) \cite{lattimore2020bandit}. In the worst case for any RL problem, we always have that $\legap(\cF'_{\lambda}, \cF) \geq \Omega((\epsilon / A)^{H/2} \lambda)$ and for such problems \pref{thm:regret_egreedy} still results in a sub-linear regret bound of \(\widetilde{O}(T^{(H+1)/(H+2)})\).

\subsection{Proof Sketch of \pref{thm:sample_complexity}} \label{sec:sample_complexity_sketch} 
We first partition $\cF'$ into subsets $\crl{\cF'_1, \dots \cF'_{\ln c(\cF', \cF)}}$ such that the myopic exploration radius $\lerad(f, \cF)$ is roughly the same for each \(f \in \cF'_i\). The proof follows by bounding the number of episodes $\cK_i = \{ t \in [T] \colon \wh f_t \in \cF'_i\}$ individually. In order do so, we consider the potential
\begin{equation}
    \sum_{t \in \cK_i} V^{\pi'_t}(x_1) - V^{\pi_t}(x_1),
    \label{eq:potential}
\end{equation}
where $\pi_t = \pi^{\wh f_t}$ and  $\pi'_t \in \Pi_{\cF}$ denotes the policy that attains the maximum in \pref{eq:local_gap} for $f = \wh f_t$. We will bound this potential from above and below as
\begin{align*}
    |\cK_i| \legap(\cF'_i, \cF) \sqrt{\lerad(\cF'_i, \cF)} \leq \eqref{eq:potential} 
    \leq \widetilde O\prn[\big]{H \sqrt{c(\cF'_i, \cF) |\cK_i| d}}
\end{align*}
which implies that $|\cK_i| = \widetilde O\prn{{ H^2 d}/{\legap(\cF_i', \cF)^2}}$,  yielding the desired statement after summing over $i$. The lower bound follows immediately from the definition of $\legap$ and $\lerad$. For the upper bound, we first compose each value difference into expected Bellman errors using
\begin{restatable}{lemma}{regretdecomp}
\label{lem:regret_decomp} 
Let $f = \{f_h\}_{h \in [H]}$ with $f_h \colon \cX \times \cA \rightarrow \bbR$ and $\pi^f$ be the greedy policy of $f$. Then for any policy $\pi' \in \Pi$,
\begin{align*}
    &V^{\pi'}_1(x_{1}) - V^{\pi^f}_1(x_{1}) 
    \leq \\
    &\sum_{h=1}^H \bbE_{\pi^f}[(\cE_{h}f)(x_{h}, a_{h})]
    - \sum_{h=1}^H \bbE_{\pi'}[(\cE_{h}f)(x_{h}, a_{h})].
\end{align*}
\end{restatable}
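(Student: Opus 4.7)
The plan is to prove this standard performance-difference identity by a telescoping argument on the Bellman residuals. Throughout, let $g_h(x) = \max_{a \in \cA} f_h(x, a)$, so that $g_h(x) = f_h(x, \pi^f_h(x))$ by definition of the greedy policy, with $g_{H+1} \equiv 0$ since $f_{H+1} = 0$.

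The first step is to show that $\sum_{h=1}^H \bbE_{\pi^f}[(\cE_h f)(x_h, a_h)] = g_1(x_1) - V^{\pi^f}_1(x_1)$ exactly. Expanding the Bellman operator $\cT_h$ and using the tower property,
\begin{align*}
\bbE_{\pi^f}[(\cE_h f)(x_h, a_h)] = \bbE_{\pi^f}[f_h(x_h, a_h)] - \bbE_{\pi^f}[r_h] - \bbE_{\pi^f}[g_{h+1}(x_{h+1})].
\end{align*}
Since $\pi^f$ is greedy, $f_h(x_h, a_h) = g_h(x_h)$; summing over $h$ telescopes the $g$-terms to $g_1(x_1)$ (using $g_{H+1}\equiv 0$ and the deterministic initial state), and the reward terms collapse to $V^{\pi^f}_1(x_1)$.

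The second step is to repeat the identical calculation for an arbitrary $\pi' \in \Pi$. The only difference is that greediness no longer applies, so $f_h(x_h, a_h) \leq g_h(x_h)$ is only an inequality. The same telescoping argument then yields
\begin{align*}
\sum_{h=1}^H \bbE_{\pi'}[(\cE_h f)(x_h, a_h)] \leq g_1(x_1) - V^{\pi'}_1(x_1).
\end{align*}
Subtracting this inequality from the equality of the first step eliminates $g_1(x_1)$ and rearranges directly to the claim.

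I do not expect any real obstacle: the proof is essentially bookkeeping, and the asymmetric sign structure of the bound arises naturally from the fact that the relation $f_h \leq g_h$ is an equality precisely for the greedy policy $\pi^f$ and may be strict for $\pi'$.
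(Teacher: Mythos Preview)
Your argument is correct. Both proofs hinge on the same two facts---telescoping against $g_h(x)=\max_a f_h(x,a)$ and the observation that $f_h(x_h,a_h)\leq g_h(x_h)$ with equality precisely under the greedy policy---but the organization differs. The paper decomposes $V_h^{\pi'}(x)-V_h^{\pi^f}(x)$ directly at each step, introduces $Q^{\pi'}$ and $Q^{\pi^f}$, and then unrolls two separate recursions for $Q^{\pi'}_h-f_h$ and $f_h-Q^{\pi^f}_h$ via the auxiliary operator $\cB_h$. You instead compute each sum $\sum_h\bbE_\pi[(\cE_h f)(x_h,a_h)]$ in closed form (as $g_1(x_1)-V^\pi_1(x_1)$ or an inequality thereof) and subtract. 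Your route is shorter and avoids the true $Q$-functions entirely; the paper's recursive unwinding is closer in spirit to the classical performance-difference lemma and makes the per-step structure more explicit, which can be handy if one later wants a time-$h$-localized version of the bound.
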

This yields an upper-bound on \pref{eq:potential} of 
\begin{align*}
    \sum_{h=1}^H \brk[\Big]{
    \underset{(A)}{\underbrace{\sum_{t \in \cK_i}\! |\bbE_{\pi'_t}[(\cE_{h} \wh f_t)(x_{h}, a_{h})]|}}
    + \!\!\underset{(B)}{\underbrace{\sum_{t \in \cK_i} \!|\bbE_{\pi_t}[(\cE_{h} \wh f_t)(x_{h}, a_{h})]|}}\!}
\end{align*}
Now, both the terms $(A)$ and $(B)$ can be bound using the standard properties of  Bellman-Eluder dimension as:
\begin{align*}
\scalebox{0.9}{$\displaystyle
    \max\crl{(B),  (A)} \lesssim \sqrt{d |\cK_i| \max_{t \in \cK_i} \sum_{\tau \in \cK_i \cap [t-1]} \prn[]{\bbE_{\pi_\tau}[(\cE_{h} \wh f_t)(x_{h}, a_{h})]}^2}.$}
\end{align*}
The remaining sum can be controlled using Jensen's inequality and the definition of myopic exploration gap as
\begin{align*}
    \sum_{\tau \in \cK_i \cap [t-1]} \!\!\!\!\!\!\!\!\bbE_{\pi_\tau}[(\cE_{h} \wh f_t)&(x_{h}, a_{h})]^2
    \leq \!\!\!\!\!
    \sum_{\tau \in \cK_i \cap [t-1]} \!\!\!\!\!\!\!\!\bbE_{\pi_\tau}[(\cE_{h}^2 \wh f_t)(x_{h}, a_{h})]\\
    \leq &
    c(\cF_i', \cF) \sum_{\tau \in \cK_i \cap [t-1]}  \bbE_{\wt \pi_\tau}[(\cE_{h}^2 \wh f_t)(x_{h}, a_{h})]\\
    \lesssim  & c(\cF_i', \cF) \ln \frac{\bar N_{\cF}(1/T) \ln(t)}{\delta},
\end{align*}
where the last inequality follows from a uniform concentration bound for square loss minimization. 

\section{Related Work}

The closest work to ours is \citet{liu2018simple} who provide conditions under which uniform exploration yields polynomial sample-complexity in infinite-horizon tabular MDPs, building on the Q-learning analysis of \citet{even2003learning} (see \pref{sec:coverlen}). Many conditions that enable efficient myopic exploration also allow us to use a smaller horizon for planning in the MDP, a question studied by \citep{jiang2016structural} (e.g., small action variation). However, sufficient conditions for shallow planning are in general not sufficient for efficient myopic exploration and vice versa.
One can for example construct cases where planning with horizon $H' = 1$ yields the optimal policy (i.e., $\pi^\star(x_h) = \arg\max_{a} r(x_h,a)$) for the original horizon $H$ but there are distracting rewards just beyond the shallow planning horizon $H' + 1$ that would throw off algorithms with $\epsilon$-greedy (see also the example in \pref{app:myopic_exploration_further}).

\citet{simchowitz2020naive} and \citet{mania2019certainty} show that simple random noise explores optimally in linear quadratic regulators, but their analysis is tailored specifically to this setting. 

There is a rich line of work on understanding the effect of reward functions and on designing suitable rewards, to speed up the rate of convergence of various RL algorithms and to make them more interpretable \cite{abel2021expressivity, devidze2021explicable, hu2020learning, mataric1994reward, icarte2022reward, marthi2007automatic}. While being extremely interesting, the problem of designing suitable reward functions to model the underlying objective is orthogonal to our focus in this paper, which is to understand rate of convergence for myopic exploration algorithms. 

Potential based rewards shaping is a popular approach in practice \cite{ng1999policy} to speed up the rate of convergence of RL algorithms. In order to quantify the role of reward shaping,   \citet{laud2003influence} provide an algorithm for which they demonstrate, both theoretically and empirically, improvement in the rate of convergence from reward shaping.  
However, their algorithm is qualitatively very different from the myopic exploration style algorithms that we consider in this paper, and is in general not efficiently implementable for MDPs with large state spaces. For a more detailed discussion of potential-based reward shaping, see \pref{sec:reward_shaping} and \pref{app:myopic_exploration_further}.

\pref{alg:egreedy_genfun} determines the Q-function estimate by a finite-horizon version of fitted Q-iteration \citep{ernst2005tree}. In the infinite horizon setting, this procedure has been analyzed by \citet{munos2008finite, antos2007fitted}. These works focus on characterizing error propagation of sampling and approximation error and simply assume sampling from a generative model or fixed behavior policy that explores sufficiently, i.e., has good state-action coverage. A recent line of work on offline RL aims to relax the coverage and additional structural assumptions, see e.g. \citet{uehara2021pessimistic} and references therein for an overview. Our work instead has a different focus. We avoid approximation errors by \pref{ass:completeness} but aim to characterize the interplay between MDP structure and the different exploration policies and their impact on the efficiency of online RL.

\section{Conclusion and Future Work}
\label{sec:conclusion}

We view this work as a first step towards fine-grained theoretical guarantees for practical algorithms with myopic exploration. We provided a complexity measure that captures many favorable cases where such algorithms work well. We focused on general results that apply to problems with general function approximation, due to the importance of myopic exploration in such settings.
One important direction for future work is an analysis of myopic exploration specialized to tabular MDPs. We believe that a finer characterization of the performance of $\epsilon$-greedy algorithms in this setting is possible by making explicit assumptions on the initialization of  function values in state-action pairs that have not been visited so far. We purposefully avoided such assumptions in our work to avoid conflating myopic exploration mechanisms with those of optimistic initializations which are known to be effective. 
It would further be interesting to compare the sample complexity of different myopic exploration strategies such as softmax-policies, additive noise or $\epsilon$-greedy and perhaps develop new myopic strategies with improved bounds. 

\subsection*{Acknowledgements}

YM received funding from the European Research Council (ERC) under the European Union’s Horizon 2020 research and innovation program (grant agreement No. 882396), the Israel Science Foundation (grant number 993/17), Tel Aviv University Center for AI and Data Science (TAD), and the Yandex Initiative for Machine Learning at Tel Aviv University. KS acknowledges support from NSF CAREER Award 1750575. 

\clearpage 
\bibliography{refs} 
\bibliographystyle{icml2022}

\clearpage

\appendix
\onecolumn

\section{Additional Discussion of Myopic Exploration Gap}
\label{app:myopic_exploration_further} 
To provide further intuition behind the definition of myopic exploration gap in \pref{def:local_exploration_gap}, we discuss the following simple example. For any horizon $H \in \bbN$, consider an MDP with states $\cS = [2^H - 1]$ organized in a binary tree. The agent starts at the root of the tree and each action in $\cA = \{0, 1\}$ chooses one of the two branches to descent. There is no stochasticity  in the transitions and the agent always ends up on one of the leaves of the tree in the final time step of the episode. The function class $\cF$ corresponds to the set of all value functions for this tabular MDP. 
The goal of the agent is to reach a certain leaf state $s_\star$. We consider three different reward functions to formulate this objective (we provide an illustration of the three rewards for \(H = 3\) in \pref{fig:illustration_tree}): 
\paragraph{Goal reward:} The agent only receives a reward when it reaches the state $s_\star$, i.e., 
\begin{align*} 
    r(s, a) = \indicator{s = s_\star}
\end{align*}
Let $s_1', a_1', s_2', a_2' \dots, s_{H-1}', a_{H-1}', s_\star$ be the unique path that leads to $s_\star$, and let $f \in \cF$ be any Q-function so that $\pi_f(s_h') \neq a_h'$ for all \(h \in [H]\). The myopic exploration gap of this function for $\epsilon$-greedy with sufficiently small $\epsilon$ is 
\begin{align*}
    \legap(f, \cF) = \left(\frac{\epsilon}{2}\right)^{\frac{H-1}{2}}.
\end{align*}
This is true because only $\pi_\star$ achieves higher return than $\pi_f$ but while $\pi_\star$ reaches $s_\star$ with probability $1$, $\myopic(f)$ only does so with probability $(\epsilon/2)^{H-1}$ and, hence, $c = (\epsilon/2)^{H-1}$. 

\paragraph{Path reward:} The agent receives a positive reward for any right action along the optimal path, i.e., 
\begin{align*}
    r'(s,a) = \frac{1}{H}\indicator{\exists i \in [H] \colon (s,a) = (s_i', a_i')}.
\end{align*}
Here, the myopic exploration gap of any $f \in \cF$ with suboptimal greedy policy $\pi_f$ is
\begin{align*}
    \legap'(f, \cF) = \frac{\sqrt{\epsilon}}{H\sqrt{2}}
\end{align*}
because there is another $f' \in \cF$ which is identical to $f \in \cF$ except that the values of $s_h'$, the last state on the desired path taken by $\pi_f$, are so that $\pi_{f'}(s_h') = a_h'$, i.e., $\pi_{f'}$ stays at least one time step longer on the optimal path. As we can see, the myopic exploration gap for this reward formulation is much more favorable. This is similar to the breadcrumb example in \pref{fig:effect_of_rewards}. Indeed, $\epsilon$-greedy exploration is much more effective in this formulation.

\paragraph{Potential-based shaping of goal reward:} Potential-based reward shaping is a popular technique for designing rewards that may be beneficial to improve the speed of learning, while retaining the policy preferences of given reward function \citep{ng1999policy}. We here consider a reward function that give a reward of $+1$ if the agent takes the first right action and then a $-1$ whenever it takes a wrong action afterwards. Formally, this reward is
\begin{align*}
    r''(s,a) = \begin{cases}
        +1 & \textrm{if } s = s_1' \textrm{ and } a = a_1'\\
        -1 & \textrm{if } s = s_h' \textrm{ for $1 < h < H$ and } a \neq a_h'\\
        0 & \textrm{otherwise}
    \end{cases}
\end{align*}

\begin{figure}[htp]

\centering
\includegraphics[width=.60\textwidth]{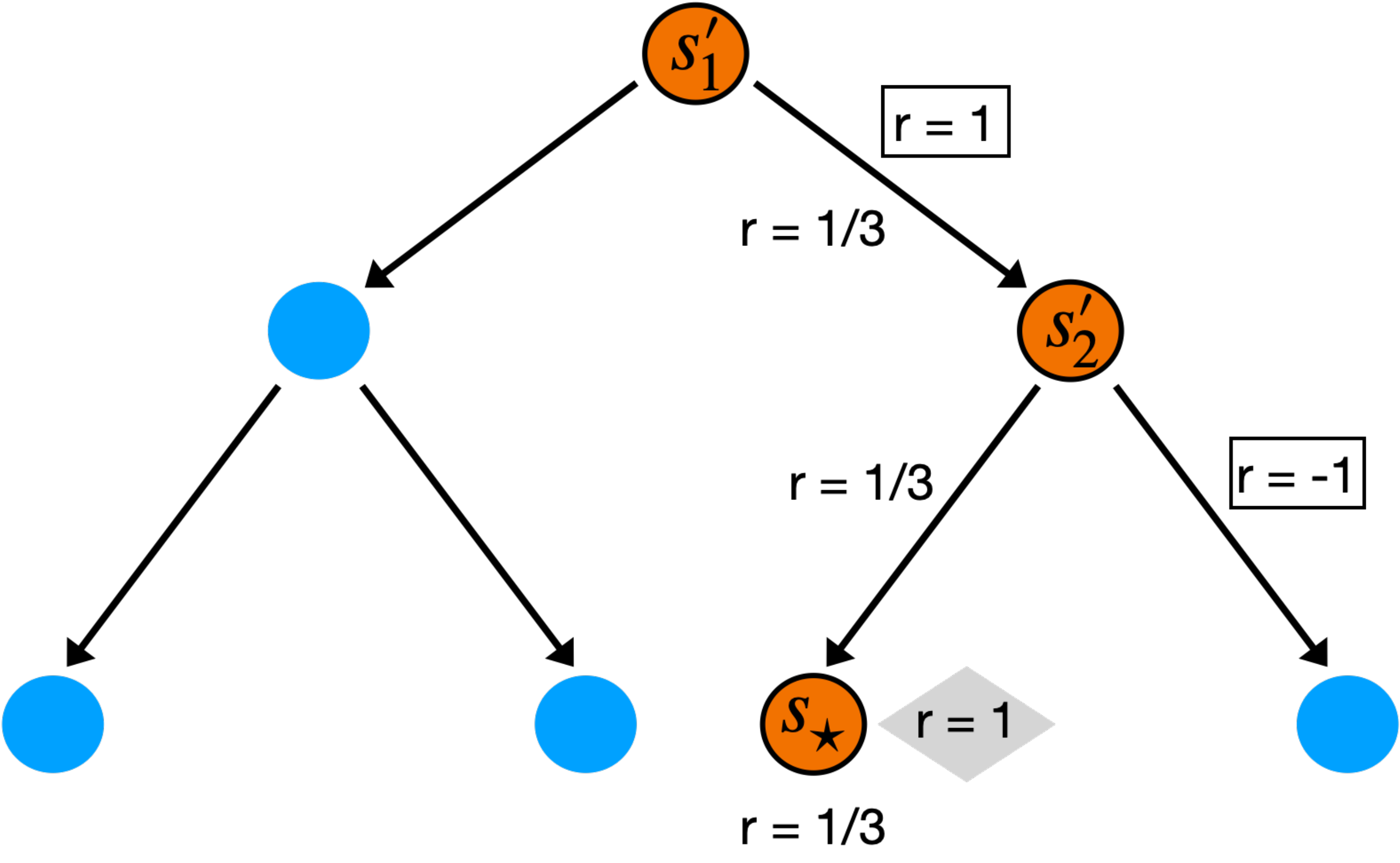}\hfill
\caption{An illustration of different reward function for \(H = 3\). Orange colored circles (circles that have a boundary) denote the optimal path \(s_1', s_2', s_\star\). The \textit{goal rewards} are specified in the shaded diamond. The \textit{path rewards} are specified without any boundary. The \textit{potential based shaped rewards} are specified in rectangular boxes. Corresponding to each reward type (given by shapes: diamond, none, boxes) any unspecified edges denote a reward of \(0\).}
\label{fig:illustration_tree}

\end{figure}

The potential function $\Phi \colon \cS \rightarrow \bbR$ that transforms $r$ into $r''$ is $\Phi(s) = - \indicator{\exists h \in [2, H] \colon s = s'_h}$. One can verify easily that $r''(s,a) = r(s,a) + \Phi(s) - \bbE_{s' \sim P(s,a)}[\Phi(s')]$ and that the return of any policy under $r$ and $r''$ is identical. Note that $r''$ is not in the range $[0,1]$. One may apply a linear transformation to all reward functions to normalize $r''$ in the range $[0, 1]$, however in the following, we work with ranges $[-1, 1]$ for reward and value functions for the ease of exposition since this does not change the argument. Since we are in the tabular setting, we can use \pref{eqn:local_gap_tabular} to compute the myopic exploration gap. Since the return of each policy under $r$ and $r''$ is identical, all myopic exploration gaps under $r$ and $r''$ are identical. This example illustrates the following general fact:
\begin{fact}
Myopic exploration gaps are not affected by potential-based reward shaping in tabular Markov decision processes.
\end{fact}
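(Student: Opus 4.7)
The plan is to work directly with the tabular formulation of the myopic exploration gap in \pref{eqn:local_gap_tabular}, and to show that each ingredient of that optimization problem\(-\)the objective, the feasible set, and the ``parameters'' $\pi^f$ and $\myopic(f)$\(-\)is unaffected by replacing a reward function $\bar r$ by a potential-based shaping $\bar r'$.

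First I would observe that the constraints $\mu^{\pi'}_h(x,a) \leq c\, \mu^{\myopic(f)}_h(x,a)$ and $\mu^{\pi^f}_h(x,a) \leq c\, \mu^{\myopic(f)}_h(x,a)$ are purely a function of the transition kernel $P$ and the three policies involved. Since occupancy measures do not depend on rewards at all, the feasible region of the supremum in \pref{eqn:local_gap_tabular} is literally identical under $\bar r$ and $\bar r'$.

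Next I would verify that the objective $V_1^{\pi'}(x_1) - V_1^{\pi^f}(x_1)$ is invariant under potential-based reward shaping. Using $\bar r_h - \bar r'_h = \Phi_h - \bbE[\Phi_{h+1} \mid x_h, a_h]$, for any policy $\pi$ a telescoping calculation gives
\begin{align*}
V_1^\pi(x_1;\bar r) - V_1^\pi(x_1;\bar r')
&= \sum_{h=1}^H \bbE_\pi\brk*{\Phi_h(x_h) - \bbE\brk{\Phi_{h+1}(x_{h+1}) \mid x_h,a_h}} \\
&= \sum_{h=1}^H \bbE_\pi\brk*{\Phi_h(x_h) - \Phi_{h+1}(x_{h+1})} \\
&= \Phi_1(x_{\textrm{init}}) - \Phi_{H+1}(x_{\textrm{end}}) = 0,
\end{align*}
by the tower property, telescoping, and the boundary conditions on $\Phi$. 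Hence $V_1^\pi(x_1)$ is preserved for every $\pi$, and in particular so is the value difference appearing in the objective.

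Finally I would note that in the tabular setting $\cF$ is the class of all $[0,1]$-valued Q-function tuples and is therefore independent of the reward; in particular the maps $f \mapsto \pi^f$ and $f \mapsto \myopic(f)$ depend only on $f$. Combining the three observations, the optimization problem in \pref{eqn:local_gap_tabular} is pointwise identical under $\bar r$ and $\bar r'$, so the suprema (and hence $\legap$) coincide; the same reasoning handles $\lerad$. The only mildly delicate point is the boundary step in the telescoping sum\(-\)which is where the assumptions $\Phi_1(x_{\textrm{init}}) = \Phi_{H+1}(x_{\textrm{end}}) = 0$ are essential\(-\)but this is a standard one-line check rather than a real obstacle.
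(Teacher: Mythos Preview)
Your proposal is correct and follows essentially the same argument as the paper: the paper observes that in the tabular formulation \pref{eqn:local_gap_tabular} the constraints are reward-free occupancy conditions and the objective depends on rewards only through policy returns, which are invariant under potential-based shaping by the standard telescoping computation. You simply spell out each of these steps in more detail than the paper does.
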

Here, this implies that the myopic exploration gap under $r''$ of any Q-function $f \in \cF$  with $\pi_f(s_h') \neq a_h'$ for all $h \in [H]$ is  
\begin{align*}
    \legap''(f, \cF) = \left(\frac{\epsilon}{2}\right)^{\frac{H-1}{2}}.
\end{align*}
This suggests that the sample complexity of $\epsilon$-greedy exploration in this example is exponential in $H$. This may be surprising since the myopic greedy policy that maximizes only the immediate reward, i.e. takes actions $\argmax_{a \in \cA} r''(s, a)$, is optimal in this problem. This implies that shallow planning with horizon $1$ is sufficient in this problem and may suggest that the sample complexity is \emph{not} exponential in $H$. How can this conundrum be resolved and what is the sample-complexity of $\epsilon$-greedy in this problem?

We will argue in the following that the efficiency of $\epsilon$-greedy in this problem depends how we initialize the Q-function estimate of state-action pairs that have not been visited. There are initializations under which the sample-complexity is polynomial or exponential in $H$ respectively. We therefore conclude that the benefit of potential-based reward shaping in this case is not due to exploration through $\epsilon$-greedy but rather optimism or pessimism in the initializations. Note that our procedure in \pref{alg:egreedy_genfun} does not prescribe an initialization (all initializations are minimizers of $\cL_h(f_{t, h+1})$).

First consider initializing the Q-function table with all entries to be equal to $0$. In this case, the agent will try action $a_1'$ in state $s_1'$ after at most $\Omega(1 / \epsilon)$ episodes. Independent of which actions it took afterwards, the V-value estimate for $s_2'$ will be $0$ and will always remain this value since everything is deterministic in this problem and $0$ is the correct value. As a result the Q-function estimate for $(s_1', a_1')$ is $+1$ and the greedy policy will take $a_1'$ in $s_1'$. Now, in any episode where the agent actually follows the greedy policy, it learns that one action that deviates from the optimal path is suboptimal. Hence, after $\text{O}(H/\epsilon)$ episodes, the algorithm will choose the optimal policy as its greedy policy and even learn the optimal value function. Hence, the sample-complexity of $\epsilon$-greedy with this intialization is indeed polynomial in $H$. Interestingly, the $\epsilon$-greedy with the same $0$-initialization has exponential sample complexity for the original goal-based rewards $r$. This is because none of the Q-function estimates changes until the agent first reaches $s_\star$. Essentially, the same initialization is pessimistic under the original reward function $r$ but optimistic under the shaped version $r''$.

Now consider initializing the Q-function table with all entries to be equal to $-1$. The agent will try action $a_1'$ in state $s_1'$ after at most $\Omega(1 / \epsilon)$ episodes. Unless it happens to exactly follow the optimal path, which only happens with probability $(1/2)^{H-1}$, the agent will learn to associate a Q-value of $0$ for the initial action and a Q-value of $-1$ for all actions along the optimal path afterwards. Hence, it has no preference between the actions in any state of the optimal path (except for the first action) and would still need at least $(1/2)^{H-2}$ episodes to randomly follow the optimal path and discover the reward of $0$ in the final state. Hence, the sample-complexity of $\epsilon$-greedy with this initialization is indeed exponential in $H$, since there was no optimism in the initialization and $\epsilon$-greedy was ineffective at exploring for this problem.

\section{Proofs For Bounds on Myopic Exploration Gap in \pref{sec:local_exploration_gap}} 

\legapgeneral* 
\begin{proof} 
For the upper-bound, note that the objective of \pref{eq:local_gap} is bounded for all $c \geq 1$ and $\pi' \in \Pi_{\cF} \subseteq \Pi$, as
\begin{align*}
\frac{1}{\sqrt{c}} (V_1^{\pi'}(x_1) - V_1^{\pi^f}(x_1)) &\leq V_1^{\pi'}(x_1) - V_1^{\pi^f}(x_1) \\ &\leq V_1^{\star}(x_1) - V_1^{\pi^f}(x_1).
\end{align*}
 For the lower-bound, we show that $\pi' = \pi^\star$ and $c = \max_{\pi' \in \Pi'} \left\| \frac{\bbP_{\pi'}}{\bbP_{\myopic(f)}}\right\|_{\infty}$ is a feasible solution for \pref{eq:local_gap}. Since $Q^\star \in \cF$ by realizability, any optimal policy $\pi^\star \in \Pi_{\cF} = \Pi'$ is feasible as the expected bellman error for \(f'\) corresponding to \(\pi^*\) is equal to \(0\). Further, for any $\pi' \in \Pi'$ and function $g \colon \cX \times \cA \rightarrow \bbR^{+}$ we have
\begin{align*}
    \bbE_{\pi'}[g(x_h, a_h)]
    \leq \bbE_{\myopic(f)}[g(x_h, a_h)]\left\| \frac{\bbP_{\pi'}}{\bbP_{\myopic(f)}}\right\|_{\infty}
\end{align*}
which shows that the value of $c$ is feasible.
\end{proof}

\coregreedygen*
\begin{proof}
The likelihood ratio of an episode $\tau = (x_1, a_1, r_1, \dots, x_{H}, a_{H}, r_{H}, x_{H+1})$ w.r.t.~$\myopic(f)$ and any other policy $\bar \pi \in \Pi$ satisfies
\begin{align*}
    \frac{\bbP_{\bar \pi}(\tau)}{\bbP_{\myopic(f)}(\tau)} \leq \prod_{h=1}^H \frac{\bar \pi(a_h | x_h)}{\myopic(f)(a_h| x_h)}
\leq \prod_{h=1}^H \frac{1}{\epsilon / A} = \left(\frac{A}{\epsilon}\right)^H\!\!\!.
\end{align*}
The result then follows from \pref{lem:legap_general_bound}.
\end{proof}

\multiactionvar*
\begin{proof} First note that for any policy \(\pi\), the occupancy measure for state \(x_h\) and action \(a_h\) is given by  
\begin{align*}
\mu^{\pi}_h(x_h, a_h) &= \sum_{x_1, \dots, x_{h-1}} \prn[\Big]{\prod_{j=1}^{h-1} \sum_{a \in \cA} \Pr(\pi(x_j) = a) P_j(x_{j+1} \mid x_j, a)} \Pr(\pi(x_h) = a_h),  \numberthis \label{eq:mu_definition} 
\end{align*} where \(P\) denotes the transition dynamics corresponding to \(\cM\) and we used the fact that \(x_1\) is fixed. Next, note that as a consequence of \pref{def:multiplicative_action_variation}, we have that for any \(x_j, x_{j+1}, a\) and \(a'\), 
\begin{align*}
P_j(x_{j+1} \mid x_j, a) \leq \delta_P P_j(x_{j+1} \mid x_j, a'). 
\end{align*}
Thus, we have that: 
\begin{align*}
P_j(x_{j+1} \mid x_j, a) &\leq \sum_{a' \in \cA} \Pr(\myopic(f)(x_j) = a')  P_j(x_{j+1} \mid x_j, a) \\ &\leq \delta_P \prn[\Big]{\sum_{a' \in \cA} \Pr(\myopic(f)(x_j) = a') P_j(x_{j+1} \mid x_j, a') }. 
\end{align*}
Plugging the above in \pref{eq:mu_definition}, we get that: 
\begin{align*}
\mu^{\pi}_h(x_h, a_h) &\leq \sum_{x_1, \dots, x_{h-1}} \prn[\Big]{\prod_{j=1}^{h-1} \sum_{a \in \cA} \Pr(\pi(x_j) = a)  \delta_P \prn[\Big]{\sum_{a' \in \cA} \Pr(\myopic(f)(x_j) = a') P_j(x_{j+1} \mid x_j, a') }} \Pr(\pi(x_1) = a) \\ 
&=  \sum_{x_1, \dots, x_{h-1}} \prn[\Big]{\prod_{j=1}^{h-1}  \delta_P \prn[\Big]{\sum_{a' \in \cA} \Pr(\myopic(f)(x_j) = a') P_j(x_{j+1} \mid x_j, a') }} \Pr(\pi(x_h) = a_h) \\  
&= \delta_P^{h-1} \sum_{x_1, \dots, x_{h-1}} \prn[\Big]{\prod_{j=1}^{h-1} {\sum_{a' \in \cA} \Pr(\myopic(f)(x_j) = a') P_j(x_{j+1} \mid x_j, a') }} \Pr(\pi(x_h) = a_h).
\intertext{Next, note that \(\myopic(f)\) is the \(\epsilon\)-greedy policy and thus \({\Pr(\myopic(x_h) = a_h)} \geq {\epsilon} / A\). Using this fact in the above bound, we get that}
\mu^{\pi}_h(x_h, a_h) &\leq \delta_P^{h-1} \sum_{x_1, \dots, x_{h-1}} \prn[\Big]{\prod_{j=1}^{h-1} {\sum_{a' \in \cA} \Pr(\myopic(f)(x_j) = a') P_j(x_{j+1} \mid x_j, a') }} \frac{A \Pr(\myopic(x_h) = a_h)}{\epsilon} \\
&= \frac{A \delta_P^{h-1}}{\epsilon} \sum_{x_1, \dots, x_{h-1}} \prn[\Big]{\prod_{j=1}^{h-1} {\sum_{a' \in \cA} \Pr(\myopic(f)(x_j) = a') P_j(x_{j+1} \mid x_j, a') }} \Pr(\myopic(x_h) = a_h) \\ 
&\leq  \frac{A \delta_P^H}{\epsilon} \mu^{\myopic(f)}_h(x_h, a_h),  \numberthis \label{eq:mu_definition_2} 
\end{align*} where the inequality in the last line holds because \(\delta_P \geq 1\) and by using the definition of $\mu^{\myopic(f)}_h(x_h, a_h)$ from \pref{eq:mu_definition}.  

Observe that \pref{eq:mu_definition_2} holds for any policy \(\pi\). Thus, using this relation for \(\pi^\star\), we get that 
\begin{align*}
    \bbE_{\pi^\star}[(\cE_h^2 f')(x_h, a_h)] &= \sum_{x_h, a_h} \mu_h^{\pi^\star}(x_h, a_h) \cdot (\cE_h^2 f')(x_h, a_h) \\ &\leq  \frac{A \delta_P^H}{\epsilon} \mu_h^{\myopic(f)}(x_h, a_h) \cdot (\cE_h^2 f')(x_h, a_h) \\
    &\leq \frac{A \delta_P^H}{\epsilon} \bbE_{\myopic(f)}[(\cE_h^2 f')(x_h, a_h)]. \numberthis \label{eq:mu_definition3} 
\end{align*} 

A similar analysis reveals that 
\begin{align*}
\bbE_{\pi^f}[(\cE_h^2 f')(x_h, a_h)] &\leq \frac{A \delta_P^H}{\epsilon} \bbE_{\myopic(f)}[(\cE_h^2 f')(x_h, a_h)]. \numberthis \label{eq:mu_definition4} 
\end{align*}
    
The relations in \pref{eq:mu_definition3} and \pref{eq:mu_definition4} thus imply that \(\pi^\star \in \Pi'\) satisfies the constraints in the definition of \(\alpha(f, \cF)\) with $c = \frac{A \delta_P^H}{\epsilon}$. We thus have that  
\begin{align*}
\alpha(f, \cF) &\geq \frac{1}{\sqrt{c}} (V_1^{\pi^\star}(x_1) - V_1^{\pi^f}(x_1)) = \sqrt{\frac{\epsilon}{A \delta_P^H}} \cdot (V_1^{\pi^\star}(x_1) - V_1^{\pi^f}(x_1)).
\end{align*}
\end{proof}

\section{Proof of Sample Complexity Lower Bound} 
\label{app:lower_bound_proofs}

\begin{figure}
\vspace{3mm}
    \centering
    \includegraphics[width=0.8\linewidth]{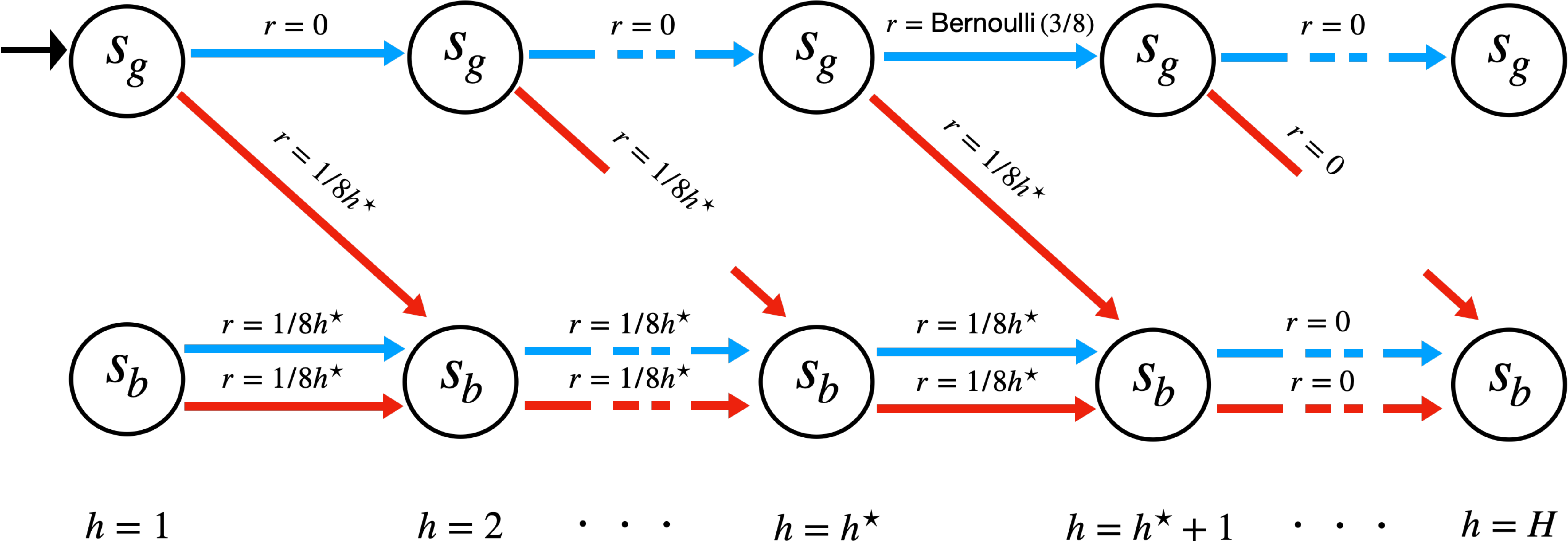}
    \caption{MDP constructon for the proof of \pref{thm:lower_bound}. There are two states \(\crl{s_g, s_b}\) and \(A\) action. The black arrow denotes the starting state. The blue arrow denotes the deterministic transition when the agent takes action \(a = 1\), and the red arrow denotes deterministic transition when the agent takes any other action in \([A] \setminus \crl{1}\). As long as the agent picks actions $1$, it stays on the good chain $s_{g}$ but as soon as it chooses any other actions it transitions to the bad chain $s_{b}$ where it will stay forever. The observed rewards are time dependent and are shown over the corresponding action arrows.}
    \label{fig:lower_bound_dynamics}
\end{figure}

\lowerb*
\begin{proof}
We first will show the statement for $S=2$ and then extend the proof to $S > 2$. 

\paragraph{Construction for $S = 2$.}
Let $A \in \bbN$, $H \in \bbN$ and $\epsilon$ be fixed and consider
states $\cX = \{s_{g}, s_{b}\}_{h  \in [H+1]}$ and actions $\cA = [A]$. 
The agent always starts in state $x_{\textrm{init}} = s_{g}$. The dynamics is deterministic, non-stationary and defined as
\begin{align*}
    P_h(s_{g} | s_{g}, a = 1) &= 1 &
    P_h(s_{b} | s_{g}, a \neq 1) &= 1 &
    P_h(s_{b} | s_{b}, a \in \cA) &= 1
\end{align*}
for $h \in [H]$. All other transitions have probability $0$. Essentially, the state space forms two chains, and the agent always progresses on a chain. As long as the agent picks actions $1$, it stays on the good chain $s_{g}$ but as soon as it chooses any other actions it transitions to the bad chain $s_{b}$ where it will stay forever. The function class $\cF = \{\cF_h\}_{h \in [H]}$ is the full tabular class, i.e., $\cF_h = \cX \times \cA \rightarrow [0, H + 1 - h]$ for $h \in [H]$.
Now, for the given value $v$, we define the reward distributions $R = \{R_h\}_{h \in [H]}$ as 
\begin{align*}
    R_h(x, a) &= 
    \begin{cases}
    0 & \textrm{if } h > h^\star\\     \operatorname{Bernoulli}\left(\frac{3}{8}\right)
    & \textrm{if }h = h^\star \textrm{ and }x = s_{g} \textrm{ and }a = 1\\
    0 & \textrm{if }h < h^\star \textrm{ and }x = s_{g} \textrm{ and }a = 1\\
    \frac{1}{8h^\star} & \textrm{otherwise} 
    \end{cases} 
\end{align*} 
where $h^\star = \lceil 2\log_{\epsilon / A} (4v) \rceil \leq H$. We illustrate the transition dynamics and the corresponding reward function in \pref{fig:lower_bound_dynamics}. 
    
\paragraph{Value of myopic exploration gap.}
We now show that the smallest non-zero exploration gap is $v$ up to a constant factor in this MDP. The value of a deterministic policy is only determined by how long it stays in $s_{g}$ (by always taking action $a = 1$). For any $\pi \in \Pi_{\det}$, let $L(\pi) \in [H]$ denote that length. We have
\begin{align*}
    V^\pi(x_1) = \frac{3}{8}\indicator{L(\pi) \geq h^\star} + \frac{1}{8}\left(1 - \frac{L(\pi)}{h^\star}\right)\indicator{L(\pi) < h^\star} 
\end{align*}
The value of $\alpha(f, \cF, \Pi_{\cF}, \myopic, \cM)$ can be lower-bounded for any $f \in \cF$ with $L(\pi^f) < h^\star$ by considering $\pi' = \pi^\star$ in \pref{eq:local_gap}. This gives
\begin{align*}
    \alpha(f, \cF, \Pi_{\cF}, \myopic, \cM) &\geq \sqrt{\bbP_{\myopic(f)}(a_1 = 1, \dots, a_{h^\star} = 1)} \left( \frac{3}{8} - \frac{1}{8}\left(1 - \frac{L(\pi^f)}{h^\star}\right)\right)\\
    & = \left ( \frac{\epsilon}{A} \right)^{\frac{h^\star - L(\pi^f)}{2}} \left( \frac{1}{4} + \frac{1}{8}\frac{L(\pi^f)}{h^\star}\right)\\
    & \geq \frac{1}{4} \left ( \frac{\epsilon}{A} \right)^{h^\star / 2}.
\end{align*}
Further, both inequalities are tight for any $f \in \cF$ with $f_h(s_g, 1) < \max_{a \in \cA \setminus \{1\}} f_h(s_g, a)$ for all $h \leq h^\star$, i.e., value functions for which the greedy policy would always choose to go to $s_b$ in the first $h^\star$ steps.
We therefore have shown that 
\begin{align*}
    \min_{f \in \cF \colon \alpha(f, \cF, \Pi_{\cF}, \myopic, \cM) > 0} \alpha(f, \cF, \Pi_{\cF}, \myopic, \cM)
    &= \frac{1}{4} \left ( \frac{\epsilon}{A} \right)^{h^\star / 2}
    \in \left[v \frac{1}{4}\sqrt{\frac{\epsilon}{A}},  v\right]~.
\end{align*}
\paragraph{Performance of $\epsilon$-greedy.}
Since the regression loss in \pref{alg:egreedy_genfun} accesses function values $\wh f_{h+1}(x', a')$ at state-action pairs $(x', a')$ for which the algorithm has never chosen $a'$ in $x'$, the behavior of $\epsilon$-greedy depends on their default value. 
To avoid a bias towards optimistic or pessimistic intialization, we assume that the datasets $\cD_h$ in \pref{alg:egreedy_genfun} are initialized with one sample transition $(x, a, r, x')$ from each $(x,a) \in \cX \times \cA$. An alternative to this assumption is to simply define the value function class $\cF$ given to the agent to be restricted to only those functions that match the optimal value as soon as an action $a \neq 1$ was taken, i.e.,
\begin{align*}
    \cF_h = \{ f \colon \cX \times \cA \rightarrow [0, 1] \mid f_h(x, a) = Q^\star_h(x, a) \forall (x, a) \textrm{ with } x = s_b \textrm{ or } a \neq 1\}
\end{align*}
In this case, the argument below applies as soon as the agent visits $(s_g, 1)$ at time $h^{\star}$ for the first time.

Note that the MDP $\cM$ is deterministic with the exception of the reward at $(s_g, 1)$ at time $h^\star$. Therefore, only two possible intializations are possible. With probability at least $1/4$, the algorithm was initialized with $(s_g, 1, 0, s_b)$ at time $h^\star$. In this case, we have
\begin{align*}
    \wh f_h(s_g, a) = 
    \begin{cases}
    \frac{h^\star - h}{8h^\star} & \textrm{if } a = 1\\
    \frac{h^\star - h + 1}{8h^\star} & \textrm{if } a \neq 1\\
    \end{cases}
\end{align*}
for all $h \leq h^\star$ and $\pi^{\wh f}$ would always choose a wrong action. Unless the agent receives a new sample from state-action pair $(s_g, 1)$ at time $h^\star$, this estimate will also not change since all other observations are deterministic. The probability with which $\myopic(\wh f)$ will receive such a sample in an episode is $(\epsilon / A)^{h^\star}$ and thus, the agent will require $\Omega(1/v^2)$ samples in expectation before it can switch to a different function.

\paragraph{Extension to $S \geq 8$.}
Without loss of generality, we can assume that $S$ is even, otherwise just choose $S \rightarrow S - 1$. We then create $n = S / 2$ copies of the 2-state MDP described above. The initial state distribution is uniform over all copies of $s_g$.\footnote{If we desire a deterministic start state, we can just increase the horizon $H \gets H + 1$ by one and have all actions transition uniformly to all copies in $h = 1$.}. The value of any deterministic policy is still only determined by how long it stays in $s_{g, i}$, each copy of $s_g$,
\begin{align*}
    \bbE_\pi[V^\pi(x_1)] = \frac{3}{8} \cdot\frac{1}{n} \sum_{i=1}^{n}\indicator{L_i(\pi) \geq h^\star} + \frac{1}{8}\cdot\frac{1}{n} \sum_{i=1}^{n}\left(1 - \frac{L(\pi)}{h^\star}\right)\indicator{L(\pi) < h^\star} 
\end{align*}
where $L_i(\pi)$ is the number of time steps the agent stays in $s_{g,i}$ for each copy $i$. The instantaneous regret of $\pi$ is then
\begin{align*}
    \bbE_\pi[V^\star(x_1) - V^\pi(x_1)] 
    &= 
    \frac{3}{8} -  \frac{1}{8}\cdot\frac{1}{n} \sum_{i=1}^{n}\left(1 + \frac{L(\pi)}{h^\star}\right)\indicator{L(\pi) < h^\star} 
    \\&=
    \frac{1}{n} \sum_{i=1}^{n}\left(\frac{1}{4} + \frac{L(\pi)}{8h^\star}\right)\indicator{L(\pi) < h^\star}
    \\&\geq \frac{1}{4} \cdot
    \frac{1}{n} \sum_{i=1}^{n} \indicator{L(\pi) < h^\star}.
\end{align*}
Thus, any policy $\pi$ with instantaneous regret at least $1/16$ needs to behave suboptimally in at least $3n/4$ of the $n$ copies. 
Let $\cF' = \{ f \in \cF \colon \bbE[V^\star(x_1) - V^{\pi^f}(x_1)] > 1/16 \}$ be all functions that have instantaneous regret at least $1/16$.
We can lower bound the myopic exploration gap for any $f \in \cF'$ by considering $\pi' = \pi^\star$ in \pref{eq:local_gap}. This gives
\begin{align*}
    \alpha(f, \cF, \Pi_{\cF}, \myopic, \cM) &\geq \sqrt{\bbP_{\myopic(f)}(a_1 = 1, \dots, a_{h^\star} = 1)} \frac{1}{4n} \sum_{i=1}^{n}\indicator{L_i(\pi^f) < h^\star} \\
    &\geq \sqrt{\bbP_{\myopic(f)}(a_1 = 1, \dots, a_{h^\star} = 1)} \frac{3}{16} \\
    &\geq \sqrt{\frac{3n}{4n} \left ( \frac{\epsilon}{A} \right)^{h^\star }} \frac{3}{16} 
    = \frac{3 \sqrt{3}}{32} \left(\frac{\epsilon}{A} \right)^{h^\star /2}
\end{align*}
Further, for $f \in \cF$ that behave optimally in $1/4n$ copies and choose $a_1 \neq 1$ in all other copies, all inequalities are tight. Hence,
\begin{align*}
    \min_{f \in \cF'} \alpha(f, \cF, \Pi_{\cF}, \myopic, \cM)
    &= \frac{3 \sqrt{3}}{32} \left(\frac{\epsilon}{A} \right)^{h^\star /2}
    \in \left[v\frac{3}{32} \sqrt{\frac{3\epsilon}{A}},  v\right],
\end{align*}
when we choose $h^\star = \lceil 2\log_{\epsilon / A} (32v/(3 \sqrt{3}) \rceil \leq H$.

Using the same intialization of datasets $\cD_h$ of \pref{alg:egreedy_genfun} as in the $S=2$ case, each copy $i$ of the MDP has probability $p = 3/8$ to be initialized with $(s_{g, i}, 1, 1, s_{g, i})$. By Hoeffding bound, the probability that at least $n/2$ copies are initialized in such way is at most
$\bbP(\sum_{i=1}^n X_i - pn > n/8) \leq \exp(-2n/64)$. Thus, with probability at least $1 - \exp(-2n/64) \geq 1/5$, there are at least $n/2$ copies $i$ which are initialized with reward $0$ for state-action pair $(s_{g, i}, 1)$. As in the $S = 2$ case, for each $i$, the agent needs to collect another sample from this state-action pair which only happens with probability $(\epsilon / A)^{h^\star}$ per copy. Therefore, the probability that the agent receives an informative sample for any of the suboptimal copies is bounded by $(\epsilon / A)^{h^\star}$ and the agent needs to collect at least $n/8$ samples before the greedy policy can become $1/16$-optimal. Hence, the expected number of times until this happen is at least
\begin{align*}
   \frac{1}{5} \cdot \frac{n}{8} \cdot (A / \epsilon)^{h^\star} = \Omega\left(\frac{S}{\alpha(\cF', \cF)^2}\right)
\end{align*}
which completes the proof.
\end{proof}

\section{Proofs for Regret and Sample Complexity Upper Bounds}
\label{app:upper_bound_proofs}

\thmsamplecomplex* 

\begin{proof}
We partition $\cF'$ into $\cF' = \cF'_1 \cup \dots \cup \cF'_{i_{\max}}$ with $\cF'_i = \{ f \in \cF' \colon c(f, \cF, \Pi_{\cF}, \myopic, \cM) \in [e^{i-1}, e^i]\}$ and $i_{\max} = \lceil \ln \sup_{f \in \cF'} c(f, \cF, \Pi_{\cF}, \myopic, \cM) \rceil$ and denote by $\cK_{i, T}  = \{ t \in [T] \colon \wh f_k \in \cF'_i\}$ the (random) set of episodes in $[T]$ where the Q-function estimate for the episode is in the $i$-th part.
To keep the notation concise, we denote for each $t \in \bbN$ 
\begin{itemize}
    \item $\pi_t$ as the greedy policy of $\wh f_t$, i.e., $\pi_t = \pi^{\wh f_t}$
    \item $\wt \pi_t$ as the exploration policy in episode $t$, i.e., $\wt \pi_k = \myopic(\wh f_t)$
    \item $\pi_t'$ as the improvement policy $\pi' \in \Pi_{\cF}$ that attains the maximum in the myopic exploration gap definition  for $\wh f_{t}$ (defined in \pref{eq:local_gap}). 
\end{itemize}
The total difference in return between the greedy and improvement policies can be bounded using \pref{lem:regret_decomp} as
\begin{align}\label{eqn:decomp1}
   \sum_{t \in \cK_{i, T}} \left(V_1^{\pi_t'}(x_{1}) - V_1^{\pi_t}(x_{1})\right)
    \leq
        \sum_{t \in \cK_{i, T}}  \sum_{h=1}^H \bbE_{\pi_t}[(\cE_h \wh f_{t})(x_{h}, a_{h})]
    - \sum_{t \in \cK_{i, T}} \sum_{h=1}^H \bbE_{\pi'_t}[(\cE_h \wh f_{t})(x_{h}, a_{h})]
\end{align} 
Using the completeness assumption in  \pref{ass:completeness}, we show in \pref{lem:squared_error_bound}
that with probability at least $1 - \delta $ for all $(h, t) \in [H] \times \bbN$
\begin{align*}
        \sum_{\tau = 1}^{t-1} \bbE_{\wt \pi_\tau}[(\cE_h^2 \wh f_{t})(x_h, a_h)] \leq
    3 \frac{t-1}{T} + 176 \ln \frac{6 N'_{\cF}(1/T)\ln(2t)}{\delta}
\end{align*}
where $N'_{\cF}(1/T) = \sum_{h=1}^H N_{\cF_h}(1/T) N_{\cF_{h+1}}(1/T)$.
In the following, we consider only the event where this condition holds.  
Leveraging the definition of $c(f, \cF, \Pi_{\cF}, \myopic, \cM)$, we bound
\begin{align*}
\sum_{\tau \in \cK_{i, t-1}} \bbE_{\pi'_\tau}[(\cE_h \wh f_{t})(x_h, a_h)]^2 &\leq 
\sum_{\tau \in \cK_{i, t-1}} \bbE_{\pi'_\tau}[(\cE_h^2 \wh f_{t})(x_h, a_h)] \tag{Jensen's inequality} \\
&\leq \sum_{\tau =1}^{t-1} \bbE_{\pi'_\tau}[(\cE_h^2 \wh f_{t})(x_h, a_h)]
\leq e^{i}
    \sum_{\tau =1}^{t-1}  \bbE_{\wt \pi_\tau}[(\cE_h^2 \wh f_{t})(x_h, a_h)]\\ &\leq 179 e^i \ln \frac{6 N'_{\cF}(1/T)\ln(2t)}{\delta}. 
\end{align*}

Using the distributional Eluder dimension machinery in \pref{lem:bedim}, this implies that
\begin{align*}
\sum_{t \in \cK_{i, T}} |\bbE_{\pi'_t}[(\cE_h \wh f_{t})(x_h, a_h)]|
&\leq 
        O \Bigg( 
    \sqrt{ e^{i} d(\cF'_i)
    \ln \frac{ N'_{\cF}(1/T)\ln(T)}{\delta}
    |\cK_{i, T}| } 
     + \min\{|\cK_{i, T}|, d(\cF'_i) \}\Bigg)
\end{align*}
where $d(\cF') = \operatorname{dim}_{BE}(\cF', \Pi_{\cF'}, T^{-1/2}) = \max_{h \in [H]} \operatorname{dim}_{DE}(\cF'_h - \cK_h \cF'_h , \Pi_{\cF'}, T^{-1/2})$ is the Bellman-Eluder dimension.
Applying the arguments above verbatim, we can derive the same upper-bound for $\sum_{t \in \cK_{i, T}} |\bbE_{\pi_t}[(\cE_h^2 \wh f_{t})(x_h, a_h)]|$. Plugging the above two bounds in \pref{eqn:decomp1}, we obtain
\begin{align*}
    \sum_{t \in \cK_{i, T}} \left[ V^{\pi_t'}_1(x_{1}) - V_1^{\pi_t}(x_{1}) \right]
   & \leq         O \Bigg( 
    \sqrt{ e^{i} H^2 d(\cF'_i)
    \ln \frac{N'_{\cF}(1/T)\ln(T)}{\delta}
    |\cK_{i, T}| } 
     + H  d(\cF'_i) \Bigg)~.
\end{align*}
Using the myopic exploration gap in \pref{def:local_exploration_gap}, we lower-bound the LHS as
\begin{align*}
    \sum_{t \in \cK_{i, T}} \left[ V^{\pi_t'}_1(x_{1}) - V_1^{\pi_t}(x_{1}) \right]
    \geq |\cK_{i, T}| \sqrt{e^{i-1}} \inf_{f \in \cF'_i} \alpha(f, \cF, \Pi_{\cF}, \myopic, \cM)
\end{align*}
Combining both bounds and rearranging yields
\begin{align*}
    |\cK_{i, T}|
   & \leq         O \Bigg( 
    \sqrt{ \frac{H^2 d(\cF'_i)}{\inf_{f \in \cF'_i} \alpha(f, \cF, \Pi_{\cF}, \myopic, \cM)^2}
    \ln \frac{N'_{\cF}(1/T)\ln(T)}{\delta}
    |\cK_{i, T}| } 
     + \frac{H d(\cF'_i)}{\inf_{f \in \cF'_i} \alpha(f, \cF, \Pi_{\cF}, \myopic, \cM)} \Bigg).
\end{align*}
We apply the AM-GM inequality and rearrange terms to arrive at 
\begin{align*}
        |\cK_{i, T}|
   & \leq        O \Bigg( 
     \frac{H^2 d(\cF'_i)}{\inf_{f \in \cF'_i} \alpha(f, \cF, \Pi_{\cF}, \myopic, \cM)^2}
    \ln \frac{N'_{\cF}(1/T)\ln(T)}{\delta} \Bigg).
\end{align*}
Taking a union bound over $i \in [i_{\max}]$ and summing the previous bound gives
\begin{align*}
        \sum_{t=1}^T \indicator{\wh f_t \in \cF'} &=     \sum_{i=1}^{i_{\max}} |\cK_{i, T}|
    \leq        O \Bigg( 
     \sum_{i=1}^{i_{\max}} \left(\frac{d(\cF'_i)}{\inf_{f \in \cF'_i} \alpha(f, \cF, \Pi_{\cF}, \myopic, \cM)^2}\right)
    H^2 \ln \frac{N'_{\cF}(1/T)\ln(T)}{\delta} \Bigg)\\
    & \leq 
    O \Bigg( 
     \frac{H^2 d(\cF')}{\inf_{f \in \cF'} \alpha(f, \cF, \Pi_{\cF}, \myopic, \cM)^2}
    \ln(\sup_{f \in \cF'} c(f, \cF, \Pi_{\cF}, \myopic, \cM)) \ln \frac{ N'_{\cF}(1/T)\ln(T)}{\delta} \Bigg).
\end{align*}
Finally, since $N'_{\cF}(1/T) = \sum_{h=1}^H N_{\cF_{h}}(1/T) N_{\cF_{h+1}}(1/T) \leq \left(\sum_{h=1}^H N_{\cF_{h}}(1/T) \right)^2 = \bar N_{\cF}(1/T)^2$, the statement follows.
\end{proof}

\begin{customlemma}{3}\label{eight}
Let $f = \{f_h\}_{h \in [H]}$ with $f_h \colon \cX \times \cA \rightarrow \bbR$ and $\pi^f$ be the greedy policy of $f$. Then for any policy $\pi' \in \Pi$,
\vspace{-1mm}
\begin{align*}
    &V^{\pi'}_1(x_{1}) - V^{\pi^f}_1(x_{1}) 
    \leq 
    \sum_{h=1}^H \bbE_{\pi^f}[(\cE_{h}f)(x_{h}, a_{h})]
    - \sum_{h=1}^H \bbE_{\pi'}[(\cE_{h}f)(x_{h}, a_{h})].
\end{align*}
\end{customlemma}
\begin{proof}
For any function $g \colon \cX \rightarrow \bbR$, we define $(\cB_h g)(x,a) = \bbE\left[r_h + g(x_{h+1}) ~|~x_h = x, a_h = a\right]$.
First, we write the difference in value functions at any state $x \in \cX$ and time $h \in [H]$ as
\begin{align}
    V_h^{\pi'}(x) - V_h^{\pi}(x)
    &=
    \bbE[Q_h^{\pi'}(x, a) ~|~a \sim \pi'_h(x)] - 
    Q_h^{\pi}(x, \pi_{h}(x))\nonumber \\
    &=
        \bbE[Q_h^{\pi'}(x, a) - f_{h}(x, a) ~|~a \sim \pi'_h(x)]
        + f_{h}(x, \pi_{h}(x)) 
        - 
    Q_h^{\pi}(x, \pi_{h}(x))
    \nonumber \\
    & \qquad + \underset{\leq 0}{\underbrace{\bbE[f_{h}(x, a) ~|~a \sim \pi'_h(x)] 
    - f_{h}(x, \pi_{ h}(x))}}
    \label{eqn:Vdiffbound}
\end{align}
where the last term is non-positive because $\pi$ is the greedy policy of $f$. Let $g_h(x) = \max_{a \in \cA} f_h(x,a)$ for all $x \in \cX$ and write the difference $Q^{\pi'}_h - f_h$ as
\begin{align*}
    Q^{\pi'}_h - f_{h} 
    &= 
    \cB_h V^{\pi'}_{h+1}
    - f_h + \cB_h g_{h+1} - \cB_h g_{h+1}
    = \cB_h(V^{\pi'}_{h+1} - g_{h+1}) - f_h + \cT_h f_{h+1}\\
    &= \cB_h(V^{\pi'}_{h+1} - g_{h+1}) - (\cE_{h} f)(x_h, a_h) 
\end{align*}
where we used the linearity of $\cB$ and the fact that $\cB_h g_{h+1} = \cT_h f_{h+1}$.
For any $x \in \cX$ we can further bound
\begin{align*}
    V^{\pi'}_{h+1}(x) - g_{h+1}(x) \leq \bbE[Q^{\pi'}_{h+1}(x, a) - f_{h+1}(x,a) ~|~a \sim \pi'_h(x)] 
\end{align*}
and combining this with the previous identity, we have
\begin{align*}
\bbE_{\pi'}[Q_h^{\pi'}(x_h, a_h) - f_{h}(x_h, a_h) ~|~x_h = x] 
&\leq \bbE_{\pi'}[V^{\pi'}_{h+1}(x_{h+1}) - g_{h+1}(x_{h+1})  - (\cE_{h} f)(x_h, a_h)  ~|~x_h = x]~  \\ 
&\leq \bbE_{\pi'}[(Q^{\pi'}_{h+1}(x_{h+1}, a_{h+1}) - f_{h+1}(x_{h+1}, a_{h+1}))  - (\cE_{h} f)(x_h, a_h)  ~|~x_h = x].
\numberthis \label{eqn:piprime_bound}
\end{align*}
Similarly, we can write
\begin{align}
\begin{split}
    \bbE_{\pi}[(f_{h} - 
    Q_h^{\pi})(x_{h}, a_h) ~|~ x_h = x]
    &= \bbE_{\pi}[(f_h - \cT_h f_{h+1} + \cB_{h}  g_{h+1} - 
    \cB_h V_{h+1}^{\pi})(x_{h}, a_h) ~|~ x_h = x]\\
    &= \bbE_{\pi}[(\cE_{h} f)(x_h, a_h) + g_{h+1}(x_{h+1}) - V_{h+1}^{\pi}(x_{h+1} )~|~ x_h = x]\\
    &= \bbE_{\pi}[(\cE_{h} f)(x_h, a_h) + (f_{h+1}(x_{h+1}, a_{h+1}) - Q_{h+1}^{\pi}(x_{h+1}, a_{h+1})) ~|~ x_h = x]~.
    \end{split}
    \label{eqn:pi_bound}
\end{align}
Applying \pref{eqn:piprime_bound} and \pref{eqn:pi_bound} recursively to \pref{eqn:Vdiffbound}, we arrive at the desired statement
\begin{align*}
    V_1^{\pi'}(x_1) - V_1^{\pi}(x_1)
    &\leq \bbE_{\pi'}[Q_1^{\pi'}(x_1, a_1) - f_{1}(x_1, a_1) ]
    + \bbE_{\pi}[(f_{1} - 
    Q_1^{\pi})(x_{1}, a_1)]\\ 
    &\leq \sum_{h=1}^H \bbE_{\pi}[(\cE_h f)(x_h, a_h)]
    - \sum_{h=1}^H \bbE_{\pi'}[(\cE_h f)(x_h, a_h)].
\end{align*}
\end{proof}

\begin{restatable}{lemma}{squareerr}
\label{lem:squared_error_bound}
Consider \pref{alg:egreedy_genfun} with a function class $\cF$ that satisfies \pref{ass:completeness}.
Let $\rho \in \bbR^+$  and $\delta \in (0, 1)$. Then with probability at least $1 - \delta$ for all  $h \in [H]$ and $t \in \bbN$ 
\begin{align*}
    \sum_{\tau=1}^{t-1} \bbE_{\myopic(\wh f_\tau)}[ (\cE_h^2 \wh f_t) (x_h, a_h)] 
    &\leq
    3 \rho t + 176 \ln \frac{6 N'_{\cF}(\rho)\ln(2t)}{\delta}
\end{align*}
where $N'_{\cF}(\rho) = \sum_{h=1}^H N_{\cF_h}(\rho) N_{\cF_{h+1}}(\rho)$ is the sum of  $\ell_\infty$ covering number of $\cF_{h} \times \cF_{h+1}$ w.r.t.  radius $\rho>0$.
\end{restatable}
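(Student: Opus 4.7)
The plan is to combine a Freedman-type martingale concentration bound with a covering argument over $\cF_h \times \cF_{h+1}$, and then exploit the least-squares optimality in \pref{line:lsregression} together with \pref{ass:completeness}. The high-level structure mirrors standard analyses of fitted Q-iteration under Bellman completeness. The key observation is that for any $f' \in \cF_{h+1}$, the Bellman target $g^*_{f'} \ldef \cT_h f'$ lies in $\cF_h$, so the conditional mean of the regression target $r_h^\tau + \max_{a'} f'(x_{h+1}^\tau, a')$ given $(x_h^\tau, a_h^\tau)$ equals $g^*_{f'}(x_h^\tau, a_h^\tau)$, making the regression well-specified.

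To start, I would introduce, for fixed $h \in [H]$ and arbitrary $(f, f') \in \cF_h \times \cF_{h+1}$, the random variable
\[
Z_\tau(f, f') \ldef \prn[\big]{f(x_h^\tau, a_h^\tau) - y_h^\tau(f')}^2 - \prn[\big]{g^*_{f'}(x_h^\tau, a_h^\tau) - y_h^\tau(f')}^2,
\]
where $y_h^\tau(f') \ldef r_h^\tau + \max_{a'} f'(x_{h+1}^\tau, a')$. Expanding $a^2 - b^2 = (a-b)(a+b)$ shows that $\bbE[Z_\tau \mid \text{history before }\tau] = \bbE_{\wt \pi_\tau}[(f - g^*_{f'})^2(x_h, a_h)]$, and since rewards and values lie in $[0,1]$, the conditional variance of $Z_\tau$ is controlled by its conditional expectation up to absolute constants. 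A single-scale Freedman inequality (combined with AM-GM) then gives, for fixed $(f, f')$ and fixed $t$, a lower bound of the form $\sum_{\tau=1}^{t-1} Z_\tau(f, f') \geq \tfrac{1}{2} \sum_{\tau=1}^{t-1} \bbE_{\wt\pi_\tau}[(f - g^*_{f'})^2(x_h,a_h)] - C\ln(1/\delta')$ with probability $1 - \delta'$, for an absolute constant $C$.

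Next, I would make the bound uniform by union bounding over an $\ell_\infty$-cover $\cZ_h \times \cZ_{h+1}$ of size $N_{\cF_h}(\rho) N_{\cF_{h+1}}(\rho)$ and over a geometric (doubling) grid of times to get the anytime $\ln(2t)$ factor. Passing from the cover back to all of $\cF_h \times \cF_{h+1}$ uses that $\cT_h$ is $1$-Lipschitz in $\ell_\infty$, so replacing $f'$ by its cover approximation perturbs $g^*_{f'}$ by at most $\rho$; this introduces an $O(\rho)$ slack per summand, which accumulates to the $3\rho t$ term. Finally, applying this uniform bound at $(f, f') = (\wh f_{t,h}, \wh f_{t,h+1})$ and invoking least-squares optimality in \pref{line:lsregression}---which, together with $g^*_{\wh f_{t,h+1}} \in \cF_h$ from \pref{ass:completeness}, implies $\sum_{\tau=1}^{t-1} Z_\tau(\wh f_{t,h}, \wh f_{t,h+1}) \leq 0$---yields the advertised inequality after a further union bound over $h \in [H]$ (absorbed into the $\sum_{h=1}^H$ defining $N'_{\cF}(\rho)$).

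The main obstacle will be the joint covering step: because $Z_\tau(f,f')$ depends simultaneously on $f$ and on $g^*_{f'} = \cT_h f'$, one has to discretize $\cF_h$ and $\cF_{h+1}$ together, and the lift from cover to class must carefully track the $\ell_\infty$-Lipschitz dependence of $g^*_{f'}$ on $f'$ so that the discretization error accumulates only as $O(\rho t)$ rather than something worse like $O(\rho t \cdot t)$ when expanded naively. Once this uniform concentration is in place, the least-squares argument delivers the bound essentially for free, and the explicit constants $3$ and $176$ drop out of the Freedman constants and the AM-GM split used to absorb the conditional variance into the conditional mean.
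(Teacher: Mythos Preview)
Your proposal is correct and follows essentially the same route as the paper: define the excess-loss variable $Z_\tau(f,f')$ (the paper's $Y_{i,h}(f)$), compute its conditional mean and bound its conditional variance by a multiple of the mean, apply a Freedman-type martingale bound with AM--GM, cover $\cF_h \times \cF_{h+1}$ at scale $\rho$, and finish by invoking least-squares optimality plus completeness to make $\sum_\tau Z_\tau(\wh f_{t,h},\wh f_{t,h+1}) \leq 0$. The only cosmetic difference is that you obtain the anytime $\ln(2t)$ factor via a doubling grid over $t$, whereas the paper applies a time-uniform Freedman inequality (\pref{lem:simplified_freedman}, from \citet{howard2021time}) that already delivers the $\ln\ln$ dependence; both yield the stated bound.
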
 

\begin{proof}
The proof closely follows the proof of Lemma~39 by \citet{jin2021bellman}.
We first consider a fixed $t \in \bbN, h \in [H]$ and $f = \{f_h, f_{h+1}\}$ with $f_h \in \cF_h, f_{h+1} \in \cF_{h+1}$. Let 
\begin{align*}
    Y_{t, h}(f) & =\quad (f_h(x_{t, h}, a_{t, h}) - r_{t, h} - \max_{a'} f_{h+1}(x_{t, h+1}, a'))^2 - ((\cT_h f_{h+1})(x_{t, h}, a_{t, h}) - r_{t, h} - \max_{a'} f_{h+1}(x_{t, h+1}, a'))^2\\
    &= (f_h(x_{t, h}, a_{t, h}) - (\cT_h f_{h+1})(x_{t, h}, a_{t, h})) \\
    & \qquad \times (f_h(x_{t, h}, a_{t, h}) + (\cT_h f_{h+1})(x_{t, h}, a_{t, h}) - 2r_{t, h} - 2\max_{a'} f_{h+1}(x_{t, h+1}, a'))
\end{align*}
and let $\mathfrak{F}_t$ be the $\sigma$-algebra under which all the random variables in the first $t-1$ episodes are measurable. Note that $|Y_{t, h}(f)| \leq 4$ almost surely and the conditional expectation of $Y_{t, h}(f)$ can be written as
\begin{align*}
    \bbE[ Y_{t, h}(f) ~|~ \mathfrak{F}_{t}]
    &= \bbE[\bbE[ Y_{t, h}(f) ~|~ \mathfrak{F}_{t}, x_{t, h}, a_{t, h}]~|~\mathfrak{F}_{t}] = \bbE_{\myopic(\wh f_t)}[ 
    (f_h - \cT_h f_{h+1})(x_{h}, a_{h})^2]~.
\end{align*}
The variance is bounded as
\begin{align*}
\operatorname{Var}[ Y_{t, h}(f) ~|~ \mathfrak{F}_{t}]
\leq  \bbE[Y_{t, h}(f)^2 ~|~ \mathfrak{F}_{t}]
\leq 16 \bbE[(f_h - \cT_h f_{h+1})(x_{t, h}, a_{t, h})^2~|~ \mathfrak{F}_{t}]
= 16 \bbE[ Y_{t, h}(f) ~|~ \mathfrak{F}_{t}]
\end{align*}
since
$|f_h(x_{t, h}, a_{t, h}) + (\cT_h f_{h+1})(x_{t, h}, a_{t, h}) - 2r_{t, h} - 2\max_{a'} f_{h+1}(x_{h+1}, a')| \leq 4$ almost surely.
Applying \pref{lem:simplified_freedman} to the random variable $Y_{t, h}(f)$, we have that with probability at least \(1 - \delta\), for all $t \in \bbN$, 
\begin{align*}
   \sum_{i=1}^t  \bbE[ Y_{i, h}(f) ~|~ \mathfrak{F}_{i}] 
   &\leq 2 A_t \sqrt{\sum_{i=1}^t  \operatorname{Var}[ Y_{i, h}(f) ~|~ \mathfrak{F}_{i}]}
    + 12 A_t^2 +  \sum_{i=1}^t  Y_{i, h}(f)\\
    &\leq 8 A_t \sqrt{\sum_{i=1}^t  \bbE[ Y_{i, h}(f) ~|~ \mathfrak{F}_{i}]}
    + 12 A_t^2 +  \sum_{i=1}^t  Y_{i, h}(f)~,
\end{align*}
where $A_t = \sqrt{2 \ln \ln(2 t) + \ln (6 / \delta)}$.
Using AM-GM inequality and rearranging terms in the above, we get that 
\begin{align*}
   \sum_{i=1}^t  \bbE[ Y_{i, h}(f) ~|~ \mathfrak{F}_{i}] 
    &\leq  2\sum_{i=1}^t  Y_{i, h}(f) + 88 A_t^2
    \leq 2\sum_{i=1}^t  Y_{i, h}(f) + 176 \ln \frac{6\ln(2t)}{\delta}~.
\end{align*}
Let $\cZ_{\rho, h}$ be a $\rho$-cover of $\cF_{h} \times \cF_{h+1}$. Now taking a union bound over all $\phi_h \in  \cZ_{\rho, h}$ and $h \in [H]$, we obtain that with probability at least $1 - \delta$ for all $\phi_h$ and $h \in [H]$
\begin{align*}
   \sum_{i=1}^t  \bbE[ Y_{i, h}(\phi_h) ~|~ \mathfrak{F}_{i}] 
    &\leq 2\sum_{i=1}^t  Y_{i, h}(\phi_h) + 176 \ln \frac{6 N'_{\cF}(\rho) \ln(2t)}{\delta}~.
\end{align*}
This implies that with probability at least \(1 - \delta\), for all $f = \{f_h, f_{h+1}\} \in  \cF_h \times \cF_{h+1}$ and $h \in [H]$, 
\begin{align*}
   \sum_{i=1}^t \bbE[ Y_{i, h}(f) ~|~ \mathfrak{F}_{i}] 
    &\leq 2\sum_{i=1}^t Y_{i, h}(f) + 3 \rho (t-1)  + 176 \ln \frac{6 N_{\cF}(\rho)\ln(2t)}{\delta}~.
\end{align*}
This holds in particular for $f = \wh f_t = \{\wh f_{t, h}, \wh f_{t, h+1}\}$ for all $t \in \bbN$.
Finally, we have 
\begin{align*}
    \sum_{i=1}^{t-1} Y_{i, h}(\wh f_t)
    &= \sum_{i=1}^{t-1} (\wh f_{t, h}(x_{i, h}, a_{i, h}) - r_{i, h} - \max_{a'} \wh f_{t, h+1}(x_{i, h+1}, a'))^2 \\ 
    &\qquad \qquad \qquad - \sum_{i=1}^{t-1}((\cT_h \wh f_{t, h+1})(x_{i, h}, a_{i, h}) - r_{i, h} - \max_{a'} \wh f_{t,h+1}(x_{i, h+1}, a'))^2\\
    &=  \inf_{f' \in \cF_h}\sum_{i=1}^{t-1} (f'(x_{i, h}, a_{i, h}) - r_{i, h} - \max_{a'} \wh f_{t, h+1}(x_{i, h+1}, a'))^2\\
    &\qquad \qquad \qquad - \sum_{i=1}^{t-1}((\cT_h \wh f_{t, h+1})(x_{i, h}, a_{i, h}) - r_{i, h} - \max_{a'} \wh f_{t,h+1}(x_{i, h+1}, a'))^2 \\
    &\leq 0~, 
\end{align*}
where the final inequality follows from completeness in \pref{ass:completeness}.
Therefore, we have with probability at least $1 - \delta$ for all $t \in \bbN $ 
\begin{align*}
       \sum_{i=1}^{t-1}  \bbE_{\myopic(\wh f_i)}[ 
    (\wh f_{t, h} - \cT_h \wh f_{t, h+1})(x_{h}, a_{h})^2] 
    &\leq 3 \rho (t-1)  + 176 \ln \frac{6H N'_{\cF}(\rho)\ln(2t)}{\delta}~.
\end{align*}
\end{proof} 

\egreedyregret* 
\begin{proof}
We decompose the regret as
\begin{align*}
    \operatorname{Reg}(T) &= \sum_{t=1}^T \left( V^\star(x_{t, 1}) - V^{\myopic(\wh f_t)}(x_{t, 1})\right)\\
    &= \sum_{t=1}^T \left( V^\star(x_{t, 1}) -  V^{\pi_t}(x_{t, 1})\right)  +  \sum_{t=1}^T \left(V^{\pi_t}(x_{t, 1}) - V^{\myopic(\wh f_t)}(x_{t, 1}) \right)~,
\end{align*}
and bound both terms individually. The excess regret due to exploration in the second term is bounded as
\begin{align*}
      \sum_{t=1}^T \left(V^{\pi_t}(x_{t, 1}) - V^{\myopic(\wh f_t)}(x_{t, 1}) \right)
      \leq T H \epsilon .
\end{align*}
Second, let $\cF_i = \{f \in \cF \colon V^\star(x_{t, 1}) - V^{\pi^f}(x_{t, 1}) \in [(1/2)^{i}, (1/2)^{i-1}] \}$ the value functions that incur regret $[(1/2)^{i}, (1/2)^{i-1}]$ per episode. Applying \pref{thm:sample_complexity} above, we get
\begin{align*}
    \sum_{t=1}^T \left( V^\star(x_{t, 1}) - V^{\pi_t}(x_{t, 1})\right)
    \leq T 2^{-m} + \sum_{i=1}^m 2^{-i} O \Bigg( 
     \frac{H^2 d}{\legap(\cF_i, \cF)^2}
    \ln(\lerad(\cF_i, \cF)) \ln \frac{m \bar N_{\cF_i}(1/T)\ln(T)}{\delta} \Bigg)~.
\end{align*}
for any $m \in \bbN$.
For $\epsilon$-greedy, we can bound $\legap(\cF_i, \cF) \leq (A / \epsilon)^H$ and thus
\begin{align*}
    \operatorname{Reg}(T)
    \leq TH \epsilon + T 2^{-m} + H^3 d \ln(A / \epsilon) \ln \frac{m \bar N_{1/T}(\cF)\ln(T) }{\delta} \sum_{i=1}^m 2^{-i}  O \Bigg( 
     \frac{1}{\legap(\cF_i, \cF)^2}
     \Bigg)~.
\end{align*}
Assume $\legap(\cF_i, \cF) \geq (\epsilon / A)^{h/2} 2^{-i-1}$, which which always holds for $h = H$. Then
\begin{align*}
    \sum_{i=1}^m 2^{-i}   
     \frac{1}{\legap(\cF_i, \cF)^2}
     \leq \frac{A^h}{\epsilon^h} \sum_{i=1}^m \frac{2^{-i}}{2^{-2i-2}}
     = \frac{A^h}{\epsilon^h} \sum_{i=1}^m 2^{i+2} \leq \frac{4A^h m}{\epsilon^h} 2^m 
\end{align*}
and setting $m$ such that $2^m = \sqrt{\frac{T}{m d H^3 \ln(A / \epsilon) \ln \frac{\bar N_{\cF}(1/T) \ln(T)}{\delta}}} (\epsilon / A)^{h/2}$ gives
\begin{align*}
    \operatorname{Reg}(T)
    \leq TH \epsilon + \tilde  O \Bigg(
    \sqrt{\frac{h d H^3 A^h T}{\epsilon^h}  \ln(A) \ln(T) \ln \frac{\bar N_\cF(1/T)}{\delta}}
     \Bigg).
\end{align*}
Finally, denote by $L = \tilde O( \ln(A) \ln(T) \ln \frac{\bar N_\cF(1/T)}{\delta})$ the log-terms above and assume the exploration parameter is chosen as 
\begin{align*}
\epsilon = \Theta\left(\left(\frac{h H d A^h L}{T} \right)^{\frac{1}{2 + h}}\right).
\end{align*}
Then the regret bound evaluates to
\begin{align*}
    \operatorname{Reg}(T)
    \leq \tilde O\left( H T^{\frac{h+1}{h+2}} \left(H h A^h d \ln(A) \ln \frac{\bar N_\cF(1/T)}{\delta}\right)^{\frac{1}{h+2}}\right).
\end{align*}
\end{proof}

\section{Supporting Technical Results}
\label{app:supporting_results}

We recall the following standard definitions. 

\begin{definition}[$\varepsilon$-independence between distributions]
Let $\cG$ be a class of functions defined on a space $\cX$, and
$\nu, \mu_1,\dots,\mu_n$ be probability measures over $\cX$. We say $\nu$ is $\varepsilon$-independent of $\{\mu_1, \mu_2, \dots , \mu_n\}$ with respect to $\cG$ if there exists $g \in \cG$ such that $\sqrt{\sum_{i=1}^n(\bbE_{\mu_i}[g])^2}\leq \varepsilon$, but $|\bbE_\nu[g]| > \varepsilon$.
\end{definition}

\begin{definition}[(Distributional Eluder (DE) dimension]
Let $\cG$ be a function class defined on $\cX$ , and
$\Pi$ be a family of probability measures over $\cX$ . The distributional Eluder dimension $\dim_{\operatorname{DE}}(\cG, \Pi, \varepsilon)$
is the length of the longest sequence $\{\rho_1, \dots , \rho_n\} \subset \Pi$ such that there exists $\varepsilon'\geq \varepsilon$ where $\rho_i$ is $\varepsilon'$-independent of $\{\rho_1, \dots, \rho_{i-1}\}$ for all $i \in [n]$.
\end{definition}

\begin{definition}[Bellman Eluder (BE) dimension \citep{jin2021bellman}]
Let $\cE_h \cF$ be the
set of Bellman residuals induced by $\cF$ at step $h$, and $\Pi = \{\Pi_h\}_{h=1}^H$ be a collection of $H$ probability
measure families over $\cX \times \cA$. The $\varepsilon$-Bellman Eluder dimension of $\cF$ with respect to $\Pi$ is defined as
\begin{align*}
    \dim_{\operatorname{BE}}(\cF,\Pi,\varepsilon) := \max_{h\in[H]}\dim_{\operatorname{DE}}(\cE_h \cF,\Pi,\varepsilon)\,.
\end{align*}
\end{definition}

\begin{lemma}[Lemma~41, \citet{jin2021bellman}]
\label{lem:bedim}
Given a function class $\Phi$ defined on $\cX$ with $|\phi(x)| \leq C$ for all $(\phi, x) \in \Phi \times \cX$ and a family of probability measures $\Pi$ over $\cX$. Suppose sequences $\{ \phi_i\}_{i \in [K]} \subseteq \Phi$ and $\{ \mu_i\}_{i \in [K]} \subseteq \Pi$ satisfy for all $k \in [K]$ that $\sum_{i = 1}^{k-1} (\bbE_{\mu_i}[\phi_k])^2 \leq \beta$. Then for all $k \in [K]$ and $\omega > 0$
\begin{align*}
    \sum_{t = 1}^k | \bbE_{\mu_t}[\phi_t]| 
    \leq
    O \left( 
    \sqrt{\operatorname{dim}_{DE}(\Phi, \Pi, \omega) \beta k} 
    + \min\{k,  \operatorname{dim}_{DE}(\Phi, \Pi, \omega)\}C + k \omega\right)
\end{align*}
\end{lemma}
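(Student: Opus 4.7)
The plan is to follow the classical Eluder-dimension pigeonhole argument of Russo--Van Roy, adapted to the distributional setting. The target $\sum_{t=1}^k |\mathbb{E}_{\mu_t}[\phi_t]|$ will be split into three pieces: a small-expectation tail where $|\mathbb{E}_{\mu_t}[\phi_t]| \le \omega$, which trivially contributes at most $k\omega$; a first block of at most $\dim_{\operatorname{DE}}$ ``unavoidable'' indices each bounded by $C$, giving the $\min\{k,d\}C$ term; and the main block to which a Cauchy--Schwarz bound plus a counting argument yields $\sqrt{d\beta k}$.

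The key sub-claim is: for every $\varepsilon \ge \omega$, writing $d = \dim_{\operatorname{DE}}(\Phi,\Pi,\omega)$, the number of indices $t\in[k]$ with $|\mathbb{E}_{\mu_t}[\phi_t]| > \varepsilon$ is at most $\bigl(\beta/\varepsilon^2 + 1\bigr)\,d$. I would prove this by a greedy partition: process indices $t$ in the increasing order of $t$ and, for the $\phi_t$ witnessing $|\mathbb{E}_{\mu_t}[\phi_t]|>\varepsilon$, assign $\mu_t$ to the first previously-opened subsequence on which $\mu_t$ is still $\varepsilon$-independent (i.e.\ the sum of squared expectations of $\phi_t$ under that subsequence's measures is $\le \varepsilon^2$); if no such subsequence exists, open a new one containing $\mu_t$. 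Every subsequence built this way is $\varepsilon$-independent in the sense of the definition, so by the Eluder-dimension bound each has length at most $d$. Moreover, each time $\mu_t$ is forced to open a new subsequence, every previously-opened subsequence contributes strictly more than $\varepsilon^2$ to $\sum_{i<t}(\mathbb{E}_{\mu_i}[\phi_t])^2$; the hypothesis $\sum_{i<t}(\mathbb{E}_{\mu_i}[\phi_t])^2 \le \beta$ therefore caps the number of subsequences by $\beta/\varepsilon^2 + 1$, giving the count.

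Given the sub-claim, I would apply Cauchy--Schwarz to the large-expectation indices: letting $\mathcal{T}=\{t : |\mathbb{E}_{\mu_t}[\phi_t]|>\omega\}$,
\begin{align*}
\sum_{t\in\mathcal{T}}|\mathbb{E}_{\mu_t}[\phi_t]| \;\le\; \sqrt{\,|\mathcal{T}|\, \textstyle\sum_{t\in\mathcal{T}}(\mathbb{E}_{\mu_t}[\phi_t])^2\,}\;\le\; \sqrt{k\,S},
\end{align*}
and then bound $S=\sum_{t\in\mathcal{T}}(\mathbb{E}_{\mu_t}[\phi_t])^2$ by a layer-cake / dyadic integration against the tail estimate $N(\varepsilon)\le (\beta/\varepsilon^2+1)d$. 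Dyadic bucketing at scales $\varepsilon_\ell = 2^{-\ell}C$ down to $\omega$ gives $S \lesssim d\beta\log(C/\omega) + dC^2$, but one can sharpen this with the standard trick of peeling off the first $d$ indices (contributing $\le dC^2$) and bounding the remaining contribution by $d\beta$ directly, avoiding the extra log. Combining the three pieces yields the stated $O(\sqrt{d\beta k}+\min\{k,d\}C+k\omega)$ bound.

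The main obstacle is the subsequence-partitioning sub-claim: one must be careful that ``$\varepsilon$-independence of a single $\mu_t$ from a subsequence'' composes correctly with the Eluder definition (which requires a \emph{witness} $\phi\in\Phi$), and that the same witness $\phi_t$ used to certify $|\mathbb{E}_{\mu_t}[\phi_t]|>\varepsilon$ can be re-used across the partition analysis. The remaining steps, namely the dyadic summation and the Cauchy--Schwarz bound, are routine once the counting lemma is in hand.
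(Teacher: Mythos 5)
First, note that the paper does not prove this lemma at all: it is imported verbatim as Lemma~41 of \citet{jin2021bellman}, whose proof is precisely the distributional adaptation of the Russo--Van~Roy pigeonhole argument you describe. Your central sub-claim and its proof are correct and match that source: the greedy partition into buckets, each of which forms an $\varepsilon$-independent sequence (witnessed for each new element $\mu_t$ by its own $\phi_t$, which is all the DE definition requires) and hence has length at most $d=\dim_{\operatorname{DE}}(\Phi,\Pi,\omega)$ when $\varepsilon\ge\omega$; and the cap of $\beta/\varepsilon^2+1$ on the number of buckets, since opening a new bucket forces every existing bucket to contribute more than $\varepsilon^2$ to $\sum_{i<t}(\mathbb{E}_{\mu_i}[\phi_t])^2\le\beta$. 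This gives $N(\varepsilon)\le(\beta/\varepsilon^2+1)\,d$ for all $\varepsilon\ge\omega$, which is the heart of the matter.

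The gap is in your final assembly. Sorting the values as $e_1\ge e_2\ge\cdots$, the counting claim gives $e_i^2\le d\beta/(i-d)$ for $i>d$, so your quantity $S=\sum_{t\in\mathcal{T}}(\mathbb{E}_{\mu_t}[\phi_t])^2$ satisfies only $S\le dC^2+d\beta\sum_{j\ge1}j^{-1}\approx dC^2+d\beta\ln k$; the assertion that the post-peeling contribution is ``$\le d\beta$ directly'' is false, so the Cauchy--Schwarz route cannot avoid the logarithm. Moreover, even granting $S\lesssim dC^2+d\beta$, Cauchy--Schwarz yields $\sqrt{kS}\approx C\sqrt{kd}+\sqrt{d\beta k}$, and $C\sqrt{kd}$ strictly exceeds the claimed $\min\{k,d\}C$ term whenever $d<k$. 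The repair is standard and you already have every ingredient: do not pass through the squared sum. Instead bound the first moment directly,
\begin{align*}
\sum_{i=1}^{k}e_i\;\le\;\min\{k,d\}\,C\;+\;\sum_{i>d,\;e_i>\omega}\sqrt{\frac{d\beta}{i-d}}\;+\;k\omega\;\le\;\min\{k,d\}\,C\;+\;2\sqrt{d\beta k}\;+\;k\omega,
\end{align*}
using $\sum_{j=1}^{k}j^{-1/2}\le2\sqrt{k}$. With that substitution your argument becomes a complete and correct proof, identical in substance to the one in the cited reference.
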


\begin{lemma}[Time-Uniform Freedman Inequality]
\label{lem:simplified_freedman}
Suppose $\{ X_t \}_{t=1}^\infty$ is a martingale difference sequence with $| X_t | \leq b$. Let 
\begin{equation*}
    \mathrm{Var}_\ell(X_\ell) = \mathbf{Var}( X_\ell | X_1, \cdots, X_{\ell-1})
\end{equation*}
Let $V_t = \sum_{\ell=1}^t \mathrm{Var}_\ell(X_\ell)$ be the sum of conditional variances of $X_t$.  Then we have that for any $\delta' \in (0,1)$ and $t \in \mathbb{N}$
\begin{equation*}
    \mathbb{P}\left(  \sum_{\ell=1}^t X_\ell >    2\sqrt{V_t}A_t + 3b A_t^2 \right) \leq \delta'
\end{equation*}
Where $A_t = \sqrt{2 \ln \ln \left(2 \left(\max\left(\frac{V_t}{b^2} , 1\right)\right)\right) + \ln \frac{6}{\delta'}}$.
\end{lemma}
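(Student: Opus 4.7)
The plan is to combine the classical Freedman exponential supermartingale with Ville's maximal inequality to obtain time-uniformity for a fixed exponential tilt $\lambda$, and then peel over geometric scales of the (random) quadratic-variation process $V_t$ so that the tilt adapts to the unknown realized magnitude of $V_t$. This is the standard \emph{stitching} recipe.

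First, I would recall Bennett's inequality in supermartingale form: for bounded martingale differences and any $\lambda \in [0, 3/b)$,
\[
\bbE\!\left[\exp\!\left(\lambda X_\ell - \tfrac{\lambda^2 \mathrm{Var}_\ell(X_\ell)/2}{1-\lambda b/3}\right) \,\Big|\, X_1,\dots,X_{\ell-1}\right] \leq 1.
\]
Taking products across $\ell$ shows that $M_t(\lambda) = \exp\!\left(\lambda S_t - \tfrac{\lambda^2 V_t/2}{1-\lambda b/3}\right)$ is a nonnegative supermartingale with $M_0 = 1$, where $S_t = \sum_{\ell=1}^t X_\ell$. Ville's maximal inequality then gives, for any fixed $\lambda$ and $\delta_0 \in (0,1)$,
\[
\Pr\!\left(\exists\, t \geq 1 : S_t \geq \tfrac{\lambda V_t/2}{1-\lambda b/3} + \tfrac{\ln(1/\delta_0)}{\lambda}\right) \leq \delta_0,
\]
which is a time-uniform Bernstein-type bound, but with $\lambda$ chosen in advance and therefore not yet adapted to $V_t$.

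To remove the need to know $V_t$ in advance, I would peel over the geometric scales $\{b^2 2^{k-1} < V_t \leq b^2 2^k\}$ for $k = 1, 2, \dots$, together with a single bottom level $\{V_t \leq b^2\}$. On the $k$th scale I would apply the fixed-$\lambda$ bound with $\lambda_k \asymp \sqrt{2\ln(1/\delta_k)/(b^2 2^k)}$ (which balances the two terms in the Bernstein bound when $V_t \approx b^2 2^k$) and a summable failure budget $\delta_k \propto \delta'/(k+1)^2$, so that $\sum_k \delta_k \leq \delta'$. A union bound yields a single good event of probability at least $1-\delta'$ on which, simultaneously for all $t$, inserting the scale $k(t)$ that contains $V_t$ produces a bound of the form $S_t \leq 2\sqrt{V_t \cdot \ln(1/\delta_{k(t)})} + c\, b\, \ln(1/\delta_{k(t)})$ for a small absolute constant $c$.

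Finally, since $k(t) \asymp \log_2 \max(V_t/b^2, 1)$ and $\ln(1/\delta_k) = \ln(6/\delta') + 2\ln(k+1)$ up to constants absorbed into the mixture weights, substituting for $k(t)$ produces exactly a $\ln\ln \max(V_t/b^2, 1)$ term, which is what the definition of $A_t$ encodes. Rearranging gives the stated form $S_t \leq 2\sqrt{V_t}\,A_t + 3b\,A_t^2$. The conceptually easy part is the supermartingale/Ville step; the main obstacle is constant-tracking during peeling, in particular aligning the mixture weights with the specific constants $2$, $3$, and $6$ that appear in the statement. If exact constants are not required, one can bypass this bookkeeping entirely by invoking an existing line-crossing inequality (e.g., the stitched Freedman-type bounds of Howard, Ramdas, McAuliffe, and Sekhon) and verifying that the stated form is a special case.
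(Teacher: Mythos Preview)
Your sketch is correct: the Freedman exponential supermartingale combined with Ville's maximal inequality, followed by geometric peeling over the scales of $V_t$ with a summable failure budget, is precisely the stitching construction that yields the stated time-uniform bound with the $\ln\ln$ term. You even anticipate the shortcut at the end, namely that one can simply invoke the line-crossing inequalities of Howard, Ramdas, McAuliffe, and Sekhon.

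That shortcut is exactly what the paper does: its entire proof is a one-line citation to \citet{howard2021time}, with no argument given. So your proposal is not merely consistent with the paper's approach, it is substantially more informative, since you have outlined the actual mechanism behind the cited result. The only caveat is the one you already flag: matching the specific constants $2$, $3$, and $6$ requires careful bookkeeping in the choice of mixture weights and tilts, and the paper sidesteps this entirely by deferring to the reference.
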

\begin{proof}
See \citet{howard2021time}.
\end{proof}

\end{document}